\begin{document}
\title{A Framework of Sparse Online Learning\\ and Its Applications}

\author{Dayong~Wang,
        Pengcheng~Wu,
        Peilin~Zhao,
        and~Steven~C.H.~Hoi, 
\IEEEcompsocitemizethanks{\IEEEcompsocthanksitem Dayong Wang is with Department of Computer Science and Engineering, Michigan State University, USA 48824.\protect\\
E-mail: dywang@msu.edu
\IEEEcompsocthanksitem Pengcheng Wu and Steven C.H. Hoi are with School of Information Systems, Singapore Management University, Singapore 178902.\protect\\
E-mail:  \{pcwu, chhoi\}@smu.edu.sg
\IEEEcompsocthanksitem $^\ast$Peilin Zhao, the corresponding author, is with Institute for Infocomm Research, A*STAR, Singapore 138632.\protect\\
E-mail:  zhaop@i2r.a-star.edu.sg}
\thanks{}}

\newtheorem{thm}{Theorem}
\newtheorem{prop}{Proposition}
\newtheorem{lemma}{Lemma}
\newtheorem{cor}[thm]{Corollary}
\newtheorem{definition}[thm]{Definition}
\newcommand{\includeMyGraphicA}[1]{\includegraphics[width=2.2in, height=4in]{#1}}
\newcommand{\includeMyGraphicB}[1]{\includegraphics[width=3.2in, height=2.6in]{#1}}
\newcommand{\makeMyboxA}[1]{\makebox[2.2in]{#1}}
\newcommand{\makeMyboxB}[1]{\makebox[3.2in]{#1}}
\def \g    {\mathbf{g}}
\def \x    {\mathbf{x}}
\def \z    {\mathbf{z}}
\def \hy   {\hat{y}}
\def \H    {\mathcal{H}_{\kappa}}
\def \R    {\mathbb{R}}
\def \w    {\mathbf{w}}
\def \u    {\mathbf{u}}
\def \E    {\mathbb{E}}
\def \M    {\mathcal{M}}
\def \L    {\mathcal{L}}
\def \I    {\mathbb{I}}
\def \sign {\mathrm{sign}}
\def \det  {\mathrm{det}}
\def \diag {\mathrm{diag}}
\newcommand{\fix}{\marginpar{FIX}}
\newcommand{\new}{\marginpar{NEW}}

\IEEEcompsoctitleabstractindextext{%
\begin{abstract}
The amount of data in our society has been exploding in the era of big data today. In this paper, we address several open challenges of big data stream classification, including high volume, high velocity, high dimensionality, high sparsity, and high class-imbalance. Many existing studies in data mining literature solve data stream classification tasks in a batch learning setting, which suffers from poor efficiency and scalability when dealing with big data. To overcome the limitations, this paper investigates an online learning framework for big data stream classification tasks. Unlike some existing online data stream classification techniques that are often based on first-order online learning, we propose a framework of Sparse Online Classification (SOC) for data stream classification, which includes some state-of-the-art first-order sparse online learning algorithms as special cases and allows us to derive a new effective second-order online learning algorithm for data stream classification. In addition, we also propose a new cost-sensitive sparse online learning algorithm by extending the framework with application to tackle online anomaly detection tasks where class distribution of data could be very imbalanced. We also analyze the theoretical bounds of the proposed method, and finally conduct an extensive set of experiments, in which encouraging results validate the efficacy of the proposed algorithms in comparison to a family of state-of-the-art techniques on a variety of data stream classification tasks.
\end{abstract}

\begin{keywords}
online learning; sparse learning; classification; cost-sensitive learning.
\end{keywords}}

\maketitle
\IEEEdisplaynotcompsoctitleabstractindextext
\IEEEpeerreviewmaketitle

\section{Introduction}
In the era of big data today, the amount of data in our society has been exploding, which has raised many opportunities and challenges for data analytic research in data mining community. In this work, we aim to address the challenging real-world big data stream classification task, such as web-scale spam email classification. In general, big data stream classification has several characteristics:

\begin{compactitem}
\item{\bf high volume}: one has to deal with huge amount of existing training data, in million or even billion scale;
\item{\bf high velocity}: new data often arrives sequentially and very rapidly, e.g., about $182.9$ billion emails are sent/received worldwide every day  according to an email statistic report by the Radicati Group~\cite{Radicati2013};
\item{\bf high dimensionality}: there are a large number of features, e.g., for some spam email classification tasks, the length of the vocabulary list can go up from $10,000$ to $50,000$ or even to million scale;
\item{\bf high sparsity}: many feature elements are zero, and the faction of active features is often small, e.g., the spam email classification study in \cite{youn2007spam} showed that accuracy saturates with dozens of features out of tens of thousands of features; and
\item{\bf high class-imbalance}: some class considerably dominates the others, e.g., for spam email classification tasks, the number of non-spam (ham) emails is often much larger than the number of spam emails.
\end{compactitem}

The above characteristics present huge challenges for big data stream classification tasks when using conventional data stream classification techniques that are often restricted to batch learning setting and thus suffer from several critical drawbacks: (i) it requires a large memory capacity for caching arrived examples; (ii) it is expensive to collect and train on the entire data set; (iii) it suffers from expensive re-training cost whenever new training data arrives; and (iv) their assumption that all training data must be available a prior does not hold for real-world data stream applications where data arrives rapidly in a sequential manner.

To tackle the above challenges, a promising approach is to explore online learning methodology that performs incremental training over streaming data in a sequential manner.  Typically, an online learning algorithm processes one instance at a time and makes very simple updates with each arriving example repeatedly. In contrast to batch learning algorithms, online algorithms are not only more efficient and scalable, but also able to avoid expensive re-training cost when handling new training data, making them more favorite choices for solving large-scale machine learning tasks towards big data stream applications. In literature, a large variety of algorithms have been proposed, including a number of first-order algorithms~\cite{rosenblatt1958perceptron, DBLP:journals/jmlr/CrammerDKSS06} and second-order algorithms~\cite{DBLP:journals/siamcomp/Cesa-BianchiCG05, DBLP:journals/jmlr/SchraudolphYG07, DBLP:journals/jmlr/BordesBG09}. Despite being studied extensively, traditional online-learning algorithms suffer from critical limitation for high-dimensional data. This is because they assume at least one weight for every feature and most of the learned weights are often nonzero, making them of low efficiency not only in computational time but also in memory cost for both training and test phases. Sparse online learning~\cite{langford-2009-sparse} aims to overcome this limitation by inducing sparsity in the weights learned by an online-learning algorithm.

In this paper, we introduce a framework of Sparse Online Learning for solving large-scale high-dimensional data stream classification tasks. We show that the proposed framework covers some existing first-order sparse online classification algorithm, and is able to further derive new algorithms by exploiting the second order information. The proposed sparse online classification scheme is far more efficient and scalable than the traditional batch learning algorithms for data stream classification tasks. We further give theoretical analysis of the proposed algorithm and conduct an extensive set of experiments. The empirical evaluation shows that the proposed algorithm could achieve state-of-the-art performance.  The rest of this paper is organized as follows. Section 2 reviews related work. Section 3 presents our problem formulation. Section 4 proposes our novel framework. Section 5 discusses our experimental results, and section 6 concludes this work.

As a summary, our main contributions include:
\begin{itemize}
\item We propose a general online learning framework, which can easily derive first order and second order algorithms.
\item We provide general theoretical analysis including general regret and mistake bounds for the proposed algorithms.
\item The proposed algorithms are evaluated on several high-dimensional large-scale benchmark databases, where the state-of-the-art performances are archived.
\end{itemize}

\section{Related Work}
Our work is closely related to the studies of online learning in machine learning and data mining. Below we briefly review some important related works.

\subsection{Online Learning}

Online learning represents a family of efficient and scalable machine learning algorithms~\cite{hoi2014libol}, which would online optimize some performance measure including, accuracy~\cite{DBLP:journals/jmlr/CrammerDKSS06}, AUC~\cite{DBLP:conf/icml/ZhaoHJY11}, cost-sensitive metrics~\cite{DBLP:journals/tkde/WangZH14}, etc. Unlike batch learning methods that suffer from expensive re-training cost, online learning works sequentially by performing highly efficient (typically constant) updates for each new training data, making it highly scalable for data stream classification. In literature, various techniques~\cite{bianchi-2006-prediction, rosenblatt1958perceptron, DBLP:journals/jmlr/CrammerDKSS06, DBLP:conf/icml/DredzeCP08, DBLP:conf/nips/CrammerKD09, DBLP:journals/jmlr/ZhaoHJ11, wang2012exact} have been proposed for online learning. The well-known first-order online learning algorithms include
Perceptron~\cite{rosenblatt1958perceptron,LMP99}, Passive-Aggressive (PA) algorithms~\cite{DBLP:journals/jmlr/CrammerDKSS06}, etc.

The most well-known method is the Perceptron algorithm~\cite{rosenblatt1958perceptron,LMP99}, which updates the model by adding a new example as a support vector with some constant weight. Recently, a series of sophisticated online learning algorithms have been proposed by following the criterion of maximum margin learning principle~\cite{DBLP:journals/jmlr/Gentile01, KivinenSW01, DBLP:journals/jmlr/CrammerDKSS06}. One famous algorithm is the Passive-Aggressive (PA) algorithm~\cite{DBLP:journals/jmlr/CrammerDKSS06}, which evolves a classifier by suffering less loss on the current instance without moving far from the previous function.

In recent years, the design of many efficient online learning algorithms has been influenced by convex optimization tools. Furthermore, it was observed that most previously proposed efficient online algorithms can be jointly analyzed based on the following elegant model~\cite{Shalev-Shwartz:2012:OLO:2185819.2185820}:
\begin{algorithm}[htpb]
\caption{Online Convex Optimization Scheme}\label{alg:online_convex_optimiaztion_scheme}
\begin{algorithmic}
\STATE\textbf{INPUT : A convex set $\R^d$}
\FOR{$t=1,\ldots, T$}
\STATE predict a vector $\w_t \in \R^d$;
\STATE receive a convex loss function $\ell_t : S \rightarrow \mathbb{R}$;
\STATE suffer loss $\ell_t(\w_t)$;
\ENDFOR
\end{algorithmic}
\end{algorithm}

Based on the previous framework, we can consider online learning as an algorithmic framework for convex online learning problem:
$$\min_{\w}{f(\w)} = \min_{\w}{\sum_{t}{\ell_t(\w)}},$$
where $f(\w)$ is a convex empirical loss function for the sum of losses over a sequence of observations. The regret of the algorithm is defined as follows:
$$R_{T} = \sum_{t=1}^{T}\ell_t(\w_t) - \min_{\w} \sum_{t=1}^{T}\ell_t(\w),$$
where $\w$ is any vector in the convex space $\R^d$. The goal of online learning algorithm is to find a low regret scheme, in which the regret $R_T$ grows sub-linearly with the number of iteration $T$. As a result, when the round number $T$ goes to infinity, the difference between the \emph{average} loss of the learner and the \emph{average} lost of the best learner tends to zero.

Although the general online learning algorithms (e.g., Perceptron and PA) have solid theoretical guarantees and performs well on many applications, generally they are limited in several aspects. First, the general online learning algorithms exploit the full features, which is not suitable for large-scale high-dimensional problem. To tackle this limitation, the \emph{sparse online learning} has been extensively studied recently. Second, the general online learning algorithms only exploit the first order information and all features are adopted the same learning rate. This problem can be addressed by \emph{second order online learning} algorithms. Last but not least, the general online learning algorithms are not suitable for the imbalance input data streams, which can be efficiently solved by the \emph{cost-sensitive online learning} algorithms. In the following parts, we will briefly introduce several representative algorithms in the previous three aspects.

\subsection{Sparse Online Learning}
\emph{Sparse online learning}~\cite{duchi-2009-sparse,langford-2009-sparse} aims to learn a sparse linear classifier, which only contains limited size of active features. It has been actively studied~\cite{duchi-2009-sparse,xiao2010dual,Shalev-Shwartz2011,wang2013online}. There are two group of solutions for \emph{sparse online learning}. The first group study on sparse online learning follows the general idea of subgradient descent with truncation. For example, Duchi and Singer propose the FOBOS algorithm~\cite{duchi-2009-sparse}, which extends the \emph{Forward-Backward Splitting} method to solve the sparse online learning problem in two phases: (i) an unconstrained subgradient descent step with respect to the loss function, and (ii) an instantaneous optimization for a trade-off between minimizing regularization term and keeping close to the result obtained in the first phase. The optimization problem in the second phase can be efficiently solved by adopting simple \emph{soft-thresholding} operations that perform some truncation on the weight vectors. Following the similar scheme, Langford et al.~\cite{langford-2009-sparse} argue that truncation on every iteration is too aggressive as each step modifies the
coefficients by only a small amount, and propose the \emph{Truncated Gradient} (TG) method which truncates coefficients every $K$ steps when they are less than a predefined threshold $\theta$. The second group study on sparse online learning mainly follows the dual averaging method of~\cite{nesterov2009primal}, can explicitly exploit the regularization structure in an online setting. For example, One representative work is \emph{Regularized Dual Averaging}(RDA)~\cite{xiao2010dual}, which learns the variables by solving a simple optimization problem that involves the running average of all past subgradients of the lost functions, not just the subgradient in each iteration. Lee et al.~\cite{lee2012manifold} further extends the RDA algorithm by using a more aggressive truncation threshold and generates significantly more sparse solutions.

\subsection{Second-order Online Learning}
\emph{Second Order Online Learning} aims to dynamically incorporate knowledge of observed data in earlier iteration to perform more informative gradient-based learning. Unlike first order algorithms that often adopt the same learning rate for all coordinates, the second order online learning algorithms adopt different distills to the step size employed for each coordinate.
A variety of second order online learning algorithms have been proposed recently. Some technique attempts to incorporate knowledge of the geometry of the data observed in earlier iterations to perform more effective online updates. For example, Balakrishnan et al.~\cite{balakrishnan2008algorithms} propose algorithms for sparse linear classifiers in the massive data setting, which requires $O(d^2)$ time and $O(d^2)$ space in the worst case. Another state-of-the-art technique for second order online learning is the family of confidence-weighted (CW) learning algorithms~\cite{DBLP:conf/icml/DredzeCP08, DBLP:conf/nips/CrammerDP08, crammer2009adaptive, ma2010exploiting, wang2012exact}, which exploit confidence of weights when making updates in online learning processes. In general, the second order algorithms are more accurate, converge faster, but fall short in two aspects (i) they incur higher computational cost especially when dealing with high-dimensional data; and (ii) the weight vectors learned are often not sparse, making them unsuitable for high-dimensional data. Recently, Duchi et al. address the sparsity and second order update in the same framework, and proposed the Adaptive Subgradient method~\cite{duchi2011adaptive} (Ada-RDA), which adaptively modifies the proximal function at each iteration to incorporate knowledge about geometry of the data.

\subsection{Cost-Sensitive Online Learning}
Cost-sensitive classification has been extensively studied in data mining and machine learning. In the past decade, a variety of cost-sensitive metrics have been proposed to tackle this problem. For example, the weighted sum of \emph{sensitivity} and \emph{specificity}~\cite{Brodersen:2010:BAP:1904935.1905533}, and the weighted \emph{misclassification cost}~\cite{conf/ecml/AkbaniKJ04, Elkan:2001:FCL:1642194.1642224}. Both cost-sensitive classification and online learning have been studied extensively in data mining and machine learning communities, respectively. There are only a few works on \emph{cost-sensitive online learning}. For example, Wang et al.~\cite{DBLP:journals/tkde/WangZH14} proposed a family of cost-sensitive online classification framework, which are designed to directly optimize two well-known cost-sensitive measures. Zhao and Hoi~\cite{Zhao:2013:COA:2487575.2487647} tackle the same problem by adopting the double updating technique and propose Cost-Sensitive Double Updating Online Learning (CSDUOL).

\section{Sparse Online Learning for Data Stream Classification}

In this section, we first introduce a general sparse online learning framework for online data stream classification, and then provide the theoretical analysis on the framework. The framework will be used to derive the family of first-order and second-order sparse online classification algorithms in the following section.

\subsection{General Sparse Online Learning}
Without loss of generality, we consider the sparse online learning algorithm for the binary classification problem, which is also mentioned as sparse online classification problem in this paper. The sparse online classification algorithm generally works in rounds. Specifically, at the round $t$, the algorithm is presented one instance $\x_t\in \R^d$, then the algorithm predicts its label  as
\begin{eqnarray}
\hy_t=\sign(\w_t^\top\x_t), \nonumber
\end{eqnarray}
where $\w_t\in \R^d$ is linear classifier maintained by the algorithm. After the prediction, the algorithm will receive the true label $y_t\in\{+1,-1\}$, and suffer a loss $\ell_t(\w_t)$. Then, the algorithm would update its prediction function $\w_t$ based on the newly received $(\x_t,y_t)$. The standard goal of online learning is to minimize the number of mistakes suffered by the online algorithm. To facilitate the analysis, we firstly introduce several functions. Firstly, the hinge loss $\ell_t(\w; (\x_t,y_t))=[1-y_t\w^\top\x_t]_+$, where $[a]_+=\max(a, 0)$, is the most popular loss function for binary classification problem. Given a series of $\delta$-strongly convex functions $\Phi_{t=1,\ldots,T}$, with respect to the norms $\|\cdot\|_{\Phi_t}$ and the dual norms $\|\cdot\|^*_{\Phi_t}$. The proposed general sparse online classification (SOC) algorithm is shown in Algorithm~\ref{alg:general_frame}.
\begin{algorithm}[htpb]
\caption{General Sparse Online Learning (SOL)}\label{alg:general_frame}
\begin{algorithmic}
\STATE\textbf{INPUT :$\lambda$, $\eta$}
\STATE\textbf{INITIALIZATION :} $\theta_1=0$.
\FOR{$t=1,\ldots, T$}
\STATE receive  $\x_t\in \R^d$;
\STATE $\u_t=\nabla \Phi^*_t(\theta_t)$;
\STATE $\w_t=\arg\min_{\w}\frac{1}{2}\|\u_t-\w\|^2_2+\lambda_t\|\w\|_1$;
\STATE predict $\hy_t=\sign(\w_t^\top\x_t)$;
\STATE receive $y_t$ and suffer $\ell_t(\w_t)=[1-y_t\w_t^\top\x_t]_+$;
\IF{$\ell_t(\w_t)>0$}
    \STATE $\theta_{t+1}=\theta_t-\eta_t \z_t$, where $\z_t=\nabla\ell_t(\w_t)$;
\ENDIF
\ENDFOR
\end{algorithmic}
\end{algorithm}

\subsection{Theoretical Bound Analysis}
In this section, we analysis the regret $R_T$ of the general sparse online learning (SOL) algorithm. Firstly, we will present a key lemma, which will facilitate the following analysis.
\begin{lemma}\label{lem:framework}
Let $\Phi_t, t=1,\ldots,T$ be $\delta$-strongly convex functions with respect to the norms $\|\cdot\|_{\Phi_t}$ and let $\|\cdot\|^*_{\Phi_t}$ be the respective dual norms. Let $\Phi(0)=0$, and $\x_1,\ldots,\x_T$ be an arbitrary sequence of vectors in $\R^d$. Assume that algorithm~\ref{alg:general_frame} is run on this sequence with the function $\Phi_t$, Then, we have the following inequality
\begin{eqnarray}\label{eqn:sparse-bound}
&&\hspace{-0.3in}\sum^T_{t=1}\eta_t (\w_t-\w)^\top\z_t\le\Phi_T(\w)\\
&&\hspace{-0.2in}+\sum^T_{t=1}\Big[\Phi^*_t(\theta_t)-\Phi^*_{t-1}(\theta_t)+\frac{\eta^2_t}{2\delta}\|\z_t\|^2_{\Phi^*_t}+\eta_t\lambda_t\|\z_t\|_1\Big],\nonumber
\end{eqnarray}
for any $\w$, and any $\lambda>0$.
\end{lemma}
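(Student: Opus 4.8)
The plan is to run the standard primal--dual (mirror-descent) argument, adapted to the time-varying regularizers $\Phi_t$ and to the extra soft-thresholding step that produces $\w_t$ from $\u_t$. The workhorse is the duality between strong convexity and smoothness: since each $\Phi_t$ is $\delta$-strongly convex with respect to $\|\cdot\|_{\Phi_t}$, its conjugate $\Phi^*_t$ is $(1/\delta)$-smooth with respect to the dual norm $\|\cdot\|_{\Phi^*_t}$, so on an update round where $\theta_{t+1}=\theta_t-\eta_t\z_t$ we obtain the per-step inequality
$$\Phi^*_t(\theta_{t+1}) \le \Phi^*_t(\theta_t) - \eta_t \langle \nabla\Phi^*_t(\theta_t), \z_t\rangle + \frac{\eta_t^2}{2\delta}\|\z_t\|^2_{\Phi^*_t}.$$
Recalling that $\u_t = \nabla\Phi^*_t(\theta_t)$, this controls $\Phi^*_t(\theta_{t+1})$ in terms of $\langle \u_t, \z_t\rangle$, and deriving it is the first step I would carry out.

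Second, I would convert the $\u_t$ term into the $\w_t$ term actually used by the algorithm. The update $\w_t=\arg\min_{\w}\frac12\|\u_t-\w\|_2^2+\lambda_t\|\w\|_1$ is coordinate-wise soft-thresholding at level $\lambda_t$, hence $\|\u_t-\w_t\|_\infty\le\lambda_t$. Pairing this $\ell_\infty$ bound with $\z_t$ via H\"older's inequality gives $\langle \u_t,\z_t\rangle \ge \langle \w_t,\z_t\rangle - \lambda_t\|\z_t\|_1$, which when substituted both replaces $\u_t$ by $\w_t$ and produces exactly the $\eta_t\lambda_t\|\z_t\|_1$ penalty appearing in the claimed bound.

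Third, I would sum the resulting per-round inequalities over $t=1,\dots,T$ and telescope. Because the conjugates change with $t$, the telescoping is not clean; reindexing $\sum_t[\Phi^*_t(\theta_t)-\Phi^*_t(\theta_{t+1})]$ and adopting the convention $\Phi^*_0\equiv 0$ (consistent with $\Phi(0)=0$) yields $\sum_t[\Phi^*_t(\theta_t)-\Phi^*_{t-1}(\theta_t)] - \Phi^*_T(\theta_{T+1})$. To introduce the comparator $\w$, I would lower-bound the leftover term by Fenchel--Young, $\Phi^*_T(\theta_{T+1}) \ge \langle \w,\theta_{T+1}\rangle - \Phi_T(\w)$, and use $\theta_{T+1}=-\sum_t \eta_t\z_t$ so that $\langle \w,\theta_{T+1}\rangle = -\sum_t\eta_t\langle \w,\z_t\rangle$. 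Rearranging then moves $\Phi_T(\w)$ and $\sum_t\eta_t\langle \w,\z_t\rangle$ to the right-hand side and leaves precisely the stated inequality.

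The main obstacle I anticipate is bookkeeping rather than any single deep estimate: keeping the time-varying telescoping consistent (the index shift and the $\Phi^*_0\equiv 0$ convention) and cleanly routing the soft-thresholding deviation into the $\|\z_t\|_1$ term via the correct $\ell_\infty/\ell_1$ H\"older pairing. One should also confirm that rounds with no update, where $\ell_t(\w_t)=0$ and $\theta_{t+1}=\theta_t$, are handled correctly — setting $\z_t=0$ on those rounds keeps every inequality valid — and that the smoothness-of-conjugate step is invoked with the matching dual norm $\|\cdot\|_{\Phi^*_t}$ so that the $\frac{\eta_t^2}{2\delta}\|\z_t\|^2_{\Phi^*_t}$ term comes out with the right constant.
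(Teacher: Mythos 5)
Your proposal is correct and follows essentially the same route as the paper's proof: conjugate smoothness of the $\delta$-strongly convex $\Phi_t$ for the per-step bound, telescoping of the time-varying conjugates into the $\Phi^*_t(\theta_t)-\Phi^*_{t-1}(\theta_t)$ terms, Fenchel--Young with $\theta_{T+1}=-\sum_t\eta_t\z_t$ to bring in the comparator $\w$, and a correction term converting $\u_t^\top\z_t$ to $\w_t^\top\z_t$. The only (cosmetic) difference is that you obtain $\w_t^\top\z_t\le\u_t^\top\z_t+\lambda_t\|\z_t\|_1$ cleanly via $\|\u_t-\w_t\|_\infty\le\lambda_t$ and H\"older's inequality, whereas the paper derives the same inequality by a coordinate-wise sign case analysis.
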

\begin{proof}
Firstly, define $\Delta_t=\Phi^*_t(\theta_{t+1})-\Phi^*_{t-1}(\theta_t)$, then
\begin{eqnarray*}
\sum^T_{t=1}\Delta_t&=&\Phi^*_T(\theta_{T+1})-\Phi^*_0(\theta_1)=\Phi^*_T(\theta_{T+1})\\
 &\ge&\w^\top\theta_{T+1}-\Phi_T(\w),
\end{eqnarray*}
where the final inequality is due to Fenchel's inequality. In addition, we have
\begin{eqnarray*}
&&\hspace{-0.3in}\Delta_t=\Phi^*_t(\theta_{t+1})-\Phi^*_t(\theta_t)+\Phi^*_t(\theta_t)-\Phi^*_{t-1}(\theta_t)\\
&&\hspace{-0.3in}\le\Phi^*_t(\theta_t)-\Phi^*_{t-1}(\theta_t)-\eta_t (\nabla \Phi^*_t(\theta_t))^\top\z_t+\frac{\eta^2_t}{2\delta}\|\z_t\|^2_{\Phi^*_t}.
\end{eqnarray*}
Combining the above two inequalities, we get
\begin{eqnarray*}
&&-\sum^T_{t=1}\eta_t\w^\top\z_t -\Phi_T(\w)\le \sum^T_{t=1}\Delta_t\\
&&\le\sum^T_{t=1}[\Phi^*_t(\theta_t)-\Phi^*_{t-1}(\theta_t)-\eta_t \u_t^\top\z_t+\frac{\eta^2_t}{2\delta}\|\z_t\|^2_{\Phi^*_t}].
\end{eqnarray*}
Rearranging the above inequality, we get
\begin{eqnarray}\label{eqn:dense-bound}
&&\hspace{-0.3in}\sum^T_{t=1}\eta_t (\u_t-\w)^\top\z_t \nonumber\\
&&\hspace{-0.3in}\le\Phi_T(\w)+\sum^T_{t=1}[\Phi^*_t(\theta_t)-\Phi^*_{t-1}(\theta_t)+\frac{\eta^2_t}{2\delta}\|\z_t\|^2_{\Phi^*_t}].
\end{eqnarray}
Now, we would connect $\w_t^\top\x_t$ and $\u_t^\top\x_t$ as follows:
\begin{eqnarray*}
&&\hspace{-0.3in}\w_t^\top\z_t=\sum^d_{i=1}w_{t,i}z_{t,i}=\sum^d_{i=1}\sign(u_{t,i})[|u_{t,i}|-\lambda_t]_+ z_{t,i}\nonumber\\
&&\hspace{-0.3in}=\sum_{u_{t,i}z_{t,i}\ge 0}[|u_{t,i}|-\lambda_t]_+ |z_{t,i}|-\sum_{u_{t,i}z_{t,i}< 0}[|u_{t,i}|-\lambda_t]_+ |z_{t,i}|\nonumber\\
&&\hspace{-0.3in}\le\sum_{u_{t,i}z_{t,i}\ge 0}|u_{t,i}| |z_{t,i}|+\sum_{u_{t,i}z_{t,i}< 0}(-|u_{t,i}||z_{t,i}|+\lambda_t|z_{t,i}|)\nonumber\\
&&\hspace{-0.3in}\le\sum_{u_{t,i}z_{t,i}\ge 0}u_{t,i}z_{t,i}+\sum_{u_{t,i}z_{t,i}< 0}(u_{t,i}z_{t,i}+\lambda_t|z_{t,i}|)\nonumber\\
&&\hspace{-0.3in}\le\u_t^\top\z_t+\lambda_t\|\z_t\|_1.
\end{eqnarray*}
Plugging this inequality into inequality~(\ref{eqn:dense-bound}) will conclude the lemma.
\end{proof}
Given this general lemma, we would provide a general corollary, which could directly upper bound the regret suffered by this framework. To derive this kind of corollary, we only need to lower bound the left hand side of the inequality~(\ref{eqn:sparse-bound}) by using $\ell_t(\w_t)-\ell_t(\w)\le (\w_t-\w)^\top\z_t$, which is the property of convex function.
\begin{cor}\label{cor:regret}
Under the assumptions of Lemma 1, if we further assume $\ell$ is convex and $\eta_t=\eta$, then the regret $R_T=\sum^T_{t=1}\ell_t(\w_t)-\min_{\w}\sum^T_{t=1}\ell_t(\w) $ of the proposed framework~(\ref{alg:general_frame}) satisfies the following inequality
\begin{eqnarray}
R_T\le\frac{\Phi_T(\w)}{\eta}+\sum^T_{t=1}[\frac{\eta}{2\delta}\|\z_t\|^2_{\Phi^*_t}+\lambda_t\|\z_t\|_1]+\frac{\sum^T_{t=1}\Delta^*_t}{\eta},
\end{eqnarray}
where $\Delta^*_t=\Phi^*_t(\theta_t)-\Phi^*_{t-1}(\theta_t)$.
\end{cor}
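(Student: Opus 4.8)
The plan is to derive the regret bound directly from Lemma~\ref{lem:framework}, converting the left-hand side of~(\ref{eqn:sparse-bound}) into the regret by invoking convexity, and then specializing the comparator. The starting point is the inequality~(\ref{eqn:sparse-bound}), which already isolates every data-dependent quantity on its right-hand side; its left-hand side is the weighted inner-product term $\sum^T_{t=1}\eta_t(\w_t-\w)^\top\z_t$, and the whole task is to show that this term dominates $\eta\sum^T_{t=1}[\ell_t(\w_t)-\ell_t(\w)]$.

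First I would invoke the convexity of each $\ell_t$. Since $\z_t=\nabla\ell_t(\w_t)$ is a (sub)gradient of $\ell_t$ at $\w_t$, the subgradient inequality gives $\ell_t(\w_t)-\ell_t(\w)\le(\w_t-\w)^\top\z_t$ for every comparator $\w$, exactly the property flagged in the text preceding the statement. Substituting the constant step size $\eta_t=\eta$ and summing over $t$, the left-hand side of~(\ref{eqn:sparse-bound}) is lower bounded by $\eta\sum^T_{t=1}[\ell_t(\w_t)-\ell_t(\w)]$. Next I would also set $\eta_t=\eta$ on the right-hand side of~(\ref{eqn:sparse-bound}), group the term $\Phi^*_t(\theta_t)-\Phi^*_{t-1}(\theta_t)$ into $\Delta^*_t$, and divide the whole inequality by $\eta>0$. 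This produces precisely the stated bound, with an arbitrary $\w$ still playing the role of the comparator on both sides.

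Finally, because the resulting inequality holds simultaneously for every fixed $\w\in\R^d$, I would specialize it to the minimizer $\w^*=\arg\min_{\w}\sum^T_{t=1}\ell_t(\w)$. The left-hand side then becomes $\sum^T_{t=1}\ell_t(\w_t)-\min_{\w}\sum^T_{t=1}\ell_t(\w)=R_T$, while the $\Phi_T(\w)$ appearing on the right is understood as evaluated at this minimizer, which completes the argument. I do not anticipate a genuine obstacle, since the reduction is mechanical once Lemma~\ref{lem:framework} is available. The only point deserving a line of care is the treatment of rounds on which no update occurs, i.e. $\ell_t(\w_t)=0$: there $\theta_{t+1}=\theta_t$ so effectively $\z_t=0$, and the subgradient inequality I rely on reads $\ell_t(\w_t)=0\le\ell_t(\w)$, which holds trivially because the hinge loss is nonnegative. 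Hence the convexity step stays valid on every round and the sum over $t=1,\ldots,T$ is fully justified.
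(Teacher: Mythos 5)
Your proof is correct and follows essentially the same route as the paper: the paper's own derivation consists precisely of lower-bounding the left-hand side of inequality~(\ref{eqn:sparse-bound}) via the convexity property $\ell_t(\w_t)-\ell_t(\w)\le(\w_t-\w)^\top\z_t$ with $\eta_t=\eta$, dividing by $\eta$, and taking the best comparator $\w$. Your additional remark about the no-update rounds (where effectively $\z_t=0$ and nonnegativity of the hinge loss saves the subgradient inequality) is a valid point of care that the paper leaves implicit.
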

Given this framework and these analysis, we would drive some specific algorithms and their regret bounds.

\section{Derived Algorithms}
In this section, we will first recover the RDA~\cite{xiao2010dual} algorithm and then derive algorithm utilizing the second order-information. In this section, we will adopt the hinge loss function and denote $\L=\{t|\ell_t(\w_t)>0\}$. We denote $L_t=\I_{(\ell_t(\w_t)>0)}$, where $\I_{v}$ is indicator function, $\I_{v}=1$ if $v$ is true, otherwise $\I_v=0$.

\subsection{First Order Algorithm }
Set $\Phi_t(\w)=\frac{1}{2}\|\w\|^2_2$, which is 1-strongly convex with respect to $\|\cdot\|_2$. And it is known that the dual norm of $\|\cdot\|_2$ is $\|\cdot\|_2$ itself, while $\Phi^*_t=\Phi_t$. Under these assumptions, we get the first order sparse online learning (FSOL) algorithm, which is the same with Regularized Dual Averaging (RDA) algorithm with soft 1-norm regularization~\cite{xiao2010dual}.
\begin{algorithm}[htpb]
\caption{First Order Sparse Online Learning (FSOL)}\label{alg:First Order}
\begin{algorithmic}
\STATE\textbf{INPUT :$\lambda$, $\eta$}
\STATE\textbf{INITIALIZATION :} $\theta_1=0$.
\FOR{$t=1,\ldots, T$}
\STATE receive  $\x_t\in \R^d$;
\STATE $\w_t=\sign(\theta_t)\odot[|\theta_t|-\lambda_t]_+$;
\STATE predict $\hy_t=\sign(\w_t^\top\x_t)$ and receive $y_t\in\{-1,1\}$;
\STATE suffer $\ell_t(\w_t)=[1-y_t\w_t^\top\x_t]_+$;
\STATE $\theta_{t+1}=\theta_t+ \eta L_t y_t \x_t$;
\ENDFOR
\end{algorithmic}
\end{algorithm}

\begin{thm}
Let $(\x_1,y_1),\ldots,(\x_T,y_T)$ be a sequence of examples, where $\x_t\in \R^d$, $y_t\in\{-1,+1\}$ and $\|\x_t\|_1\le X$ for all $t$. If further set $\lambda_t=\eta\lambda$, then the regret $R_T=\sum^T_{t=1}\ell_t(\w_t)-\min_{\w}\sum^T_{t=1}\ell_t(\w) $ suffered by the algorithm~(\ref{alg:First Order}) is bounded as follows:
\begin{eqnarray*}
R_T\le\frac{\frac{1}{2}\|\w\|^2_2}{\eta}+\frac{\eta}{2}\sum^T_{t=1}X^2+\sum^T_{t=1}\eta\lambda X,
\end{eqnarray*}
for any $\w\in\R^d$. Further setting $\eta=\frac{\|\w\|_2}{\sqrt{ (X^2+2\lambda X)T}}$, we could have
\begin{eqnarray*}
R_T\le D\sqrt{ (X^2+2\lambda X)T},
\end{eqnarray*}
for any $\w\in\{\w\ |\|\w\|_2\le D\}$.
\end{thm}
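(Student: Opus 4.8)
The plan is to specialize Corollary~1 to the first-order instantiation and then tune the learning rate $\eta$. First I would identify the ingredients that Corollary~1 requires. With $\Phi_t(\w)=\frac12\|\w\|_2^2$ the function is $1$-strongly convex with respect to $\|\cdot\|_2$, so $\delta=1$; since $\|\cdot\|_2$ is self-dual we have $\|\cdot\|_{\Phi_t^*}=\|\cdot\|_2$; and the Fenchel conjugate is $\Phi_t^*=\Phi_t$. The key structural observation is that every $\Phi_t$ is the \emph{same} function, independent of $t$, so the drift term $\Delta_t^*=\Phi_t^*(\theta_t)-\Phi_{t-1}^*(\theta_t)$ vanishes identically and the last summand $\frac{1}{\eta}\sum_t\Delta_t^*$ of Corollary~1 drops out entirely.

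Next I would control the two remaining gradient-dependent terms. On any round with $\ell_t(\w_t)>0$ the hinge-loss subgradient is $\z_t=-y_t\x_t$, so $\|\z_t\|_2=\|\x_t\|_2$ and $\|\z_t\|_1=\|\x_t\|_1$. Here I would invoke the elementary norm inequality $\|\x_t\|_2\le\|\x_t\|_1\le X$ to convert the $\ell_2$ quantity produced by the self-dual norm into the $\ell_1$ bound supplied by the hypothesis; this yields $\|\z_t\|_{\Phi_t^*}^2=\|\x_t\|_2^2\le X^2$ and $\|\z_t\|_1\le X$. Substituting $\delta=1$, $\Phi_T(\w)=\frac12\|\w\|_2^2$, $\lambda_t=\eta\lambda$, and these two bounds directly into Corollary~1 gives the first claimed inequality, where rounds with $\ell_t(\w_t)=0$ contribute $\z_t=0$ and are simply over-counted by the uniform $X^2$ term.

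Finally I would optimize the free parameter. Collapsing the per-round contributions, the first bound reads $R_T\le \frac{\|\w\|_2^2}{2\eta}+\frac{\eta}{2}T(X^2+2\lambda X)$, which has the form $A/\eta+B\eta$ with $A=\frac12\|\w\|_2^2$ and $B=\frac12 T(X^2+2\lambda X)$. Minimizing over $\eta$ (equivalently applying AM--GM) gives the optimal $\eta=\sqrt{A/B}=\frac{\|\w\|_2}{\sqrt{(X^2+2\lambda X)T}}$ and the value $2\sqrt{AB}=\|\w\|_2\sqrt{(X^2+2\lambda X)T}$; bounding $\|\w\|_2\le D$ then yields the second inequality.

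I expect no deep obstacle here, since the statement is a direct specialization of Corollary~1; the only points demanding care are verifying that the drift term $\Delta_t^*$ genuinely vanishes because the regularizers are time-invariant, and correctly bridging the $\ell_2$ gradient norm appearing in the corollary to the $\ell_1$ data bound $X$ of the hypothesis via $\|\cdot\|_2\le\|\cdot\|_1$.
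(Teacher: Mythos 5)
Your proposal is correct and follows essentially the same route as the paper: both specialize Corollary~\ref{cor:regret} with $\Phi_t(\w)=\frac{1}{2}\|\w\|_2^2$, observe that $\Delta_t^*=0$ since the regularizer is time-invariant, bound the gradient terms via $\|\x_t\|_2\le\|\x_t\|_1\le X$, and then tune $\eta$. Your treatment is in fact slightly more explicit than the paper's, which leaves the norm comparison and the final AM--GM optimization of $\eta$ implicit.
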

\begin{proof}
Firstly $\Delta^*_t=\Phi^*_t(\theta_t)-\Phi^*_{t-1}(\theta_t)=0$, then according to corollary~(\ref{cor:regret}), we have
\begin{eqnarray*}
R_T&\le&\frac{\frac{1}{2}\|\w\|^2_2}{\eta}+\sum^T_{t=1}[\frac{\eta}{2}\|L_t y_t\x_t\|^2_2+\lambda_t\|L_t y_t\x_t\|_1]\\
&\le&\frac{\frac{1}{2}\|\w\|^2_2}{\eta}+\frac{\eta}{2}\sum^T_{t=1}X^2+\sum^T_{t=1}\eta\lambda X.
\end{eqnarray*}
\end{proof}
{\bf Remark:} This bound indicates the regret of this algorithm is upper bounded by $O(\sqrt{T})$, which recovers the results in~\cite{xiao2010dual}.

\subsection{Second Order Algorithm}
Set $\Phi_t(\w)=\frac{1}{2}\w^\top A_t\w$, where $A_t=A_{t-1}+\frac{\x_t\x_t^\top}{r}, r>0$ and $A_0=I$. It is easy to verify that $\Phi_t$ is 1-strongly convex with respect to $\|\w\|^2_{\Phi_t}=\w^\top A_t \w$. Its dual function $\Phi^*_t(\w)$ is $\frac{1}{2}\w^\top A^{-1}_t\w$, while $\|\w\|^2_{\Phi^*_t}=\w^\top A^{-1}_t\w$. Using the Woodbury identity, we can incrementally update the inverse of $A_t$ as $A^{-1}_t=A^{-1}_{t-1}-\frac{A^{-1}_{t-1}\x_t\x_t^\top A^{-1}_{t-1}}{r+\x^\top_t A^{-1}_{t-1}\x_t}$. Under these assumptions, we get the second order sparse online learning (SSOL) algorithm.
\begin{algorithm}[htpb]
\caption{Second Order Sparse Online Learning (SSOL)}\label{alg:Second Order}
\begin{algorithmic}
\STATE\textbf{INPUT :$\lambda$, $\eta$}
\STATE\textbf{INITIALIZATION :} $\theta_1=0$.
\FOR{$t=1,\ldots, T$}
\STATE receive  $\x_t\in \R^d$;
\STATE $A^{-1}_t=A^{-1}_{t-1}-\frac{A^{-1}_{t-1}\x_t\x_t^\top A^{-1}_{t-1}}{r+\x^\top_t A^{-1}_{t-1}\x_t}$;
\STATE $\u_t=A^{-1}_t \theta_t$;
\STATE $\w_t=\sign(\u_t)\odot[|\u_t|-\lambda_t]_+$;
\STATE predict $\hy_t=\sign(\w_t^\top\x_t)$ and receive $y_t\in\{-1,1\}$;
\STATE suffer $\ell_t(\w_t)=[1-y_t\w_t^\top\x_t]_+$;
\STATE $\theta_{t+1}=\theta_t+ \eta L_t y_t \x_t$;
\ENDFOR
\end{algorithmic}
\end{algorithm}

\begin{thm}\label{thm:ssol}
Let $(\x_1,y_1),\ldots,(\x_T,y_T)$ be a sequence of examples, where $\x_t\in \R^d$, $y_t\in\{-1,+1\}$ and $\|\x_t\|_1\le X$ for all $t$. If further set $\lambda_t=\lambda/t$, then the regret $R_T=\sum^T_{t=1}\ell_t(\w_t)-\min_{\w}\sum^T_{t=1}\ell_t(\w) $ suffered by the algorithm~(\ref{alg:Second Order}) is bounded as
\begin{eqnarray}
&&\hspace{-0.3in}R_T \nonumber\le\frac{D^2}{2\eta}+\frac{\eta}{2}rd\log((1+\frac{X^2}{r}T))+\lambda X[\log(T)+1],
\end{eqnarray}
for any $\w\in\{\w\ |\w^\top A_T\w\le D^2\}$.
\end{thm}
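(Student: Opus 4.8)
The plan is to specialize Corollary~\ref{cor:regret} to the second-order choice $\Phi_t(\w)=\frac{1}{2}\w^\top A_t\w$, for which $\delta=1$, and then bound each of the four resulting terms separately. First I would note that because $\w$ is restricted to $\{\w\mid\w^\top A_T\w\le D^2\}$, the leading term obeys $\Phi_T(\w)/\eta=\frac{1}{2\eta}\w^\top A_T\w\le D^2/(2\eta)$. Next, since the hinge loss yields $\z_t=\nabla\ell_t(\w_t)=-y_t\x_t$ whenever $\ell_t(\w_t)>0$ (and $\z_t=0$ otherwise), we have $\|\z_t\|_1\le\|\x_t\|_1\le X$ and $\|\z_t\|^2_{\Phi^*_t}=\z_t^\top A_t^{-1}\z_t=\x_t^\top A_t^{-1}\x_t$. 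The regularization sum is then immediate: with $\lambda_t=\lambda/t$, we get $\sum_{t=1}^T\lambda_t\|\z_t\|_1\le\lambda X\sum_{t=1}^T\frac{1}{t}\le\lambda X(\log T+1)$ via the harmonic-series bound.

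For the $\Delta^*_t$ term I would exploit monotonicity of the matrices: since $A_t=A_{t-1}+\x_t\x_t^\top/r\succeq A_{t-1}\succ0$, we have $A_t^{-1}\preceq A_{t-1}^{-1}$, hence $\Delta^*_t=\frac{1}{2}\theta_t^\top(A_t^{-1}-A_{t-1}^{-1})\theta_t\le0$. Thus $\frac{1}{\eta}\sum_t\Delta^*_t\le0$, and this contribution is simply dropped from the upper bound.

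The main obstacle is the curvature term $\frac{\eta}{2}\sum_{t=1}^T\x_t^\top A_t^{-1}\x_t$, which calls for the log-determinant lemma standard in second-order online learning. Writing $\x_t=\sqrt{r}\,v_t$ so that $A_t=A_{t-1}+v_tv_t^\top$, I would combine the Sherman--Morrison formula with the matrix determinant lemma to obtain $v_t^\top A_t^{-1}v_t=a_t/(1+a_t)$ and $\det(A_t)/\det(A_{t-1})=1+a_t$, where $a_t=v_t^\top A_{t-1}^{-1}v_t\ge0$. The elementary inequality $a/(1+a)\le\log(1+a)$ then gives $v_t^\top A_t^{-1}v_t\le\log(\det(A_t)/\det(A_{t-1}))$, and summing telescopes to $\sum_{t=1}^T\x_t^\top A_t^{-1}\x_t\le r\log\det(A_T)$ since $A_0=I$. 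To turn this into the stated form I would bound $\det(A_T)$ by AM--GM on its eigenvalues, $\log\det(A_T)\le d\log(\mathrm{tr}(A_T)/d)$, and use $\mathrm{tr}(A_T)=d+\frac{1}{r}\sum_t\|\x_t\|^2_2\le d+TX^2/r$ (as $\|\x_t\|_2\le\|\x_t\|_1\le X$); since $d\ge1$ this yields $\log\det(A_T)\le d\log(1+\frac{X^2}{r}T)$.

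Assembling the four pieces gives $R_T\le\frac{D^2}{2\eta}+\frac{\eta}{2}rd\log(1+\frac{X^2}{r}T)+\lambda X(\log T+1)$, which is exactly the claimed bound. The only steps beyond routine algebra are the telescoping log-determinant argument and the trace/AM--GM estimate of $\det(A_T)$; all remaining terms follow directly from Corollary~\ref{cor:regret} and the form of the hinge-loss subgradient.
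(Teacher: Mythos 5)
Your proof is correct and follows essentially the same route as the paper's: specialize Corollary~\ref{cor:regret} with $\Phi_t(\w)=\frac{1}{2}\w^\top A_t\w$, drop the $\Delta^*_t\le 0$ contribution, bound the $\lambda_t$-sum by the harmonic series, and control $\sum_t \x_t^\top A_t^{-1}\x_t$ by the telescoping log-determinant argument before bounding $\det(A_T)$. The only cosmetic deviations are that you justify $\Delta^*_t\le 0$ via Loewner monotonicity of the matrix inverse where the paper computes $\Delta^*_t$ exactly by Sherman--Morrison, and you bound $\det(A_T)$ by AM--GM on the trace (then relax using $d\ge 1$) where the paper bounds each eigenvalue by $1+\mathrm{trace}(\sum_t \x_t\x_t^\top/r)$; both give identical conclusions.
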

\begin{proof}
Firstly, it is easy to observe
\begin{eqnarray*}
\Delta^*_t&=&\frac{1}{2}\theta_t^\top A^{-1}_t\theta_t-\frac{1}{2}\theta_t^\top A^{-1}_{t-1}\theta_t\\
&&=-\frac{(\x_t^\top A^{-1}_{t-1}\theta_t)^2}{2(r+\x^\top_t A^{-1}_{t-1}\x_t)}\le 0.
\end{eqnarray*}
Then according to the conclusion in the corollary~(\ref{cor:regret}), we have
\begin{eqnarray*}
&&\hspace{-0.3in}R_T\le\frac{\w^\top A_T\w}{2\eta}+\sum^T_{t=1}[\frac{\eta}{2}L_t\x_t^\top A^{-1}_t \x_t+\lambda_t\|L_t y_t \x_t\|_1]\\
&&\hspace{-0.2in}\le\frac{\w^\top A_T\w}{2\eta}+\frac{\eta}{2}\sum^T_{t=1}\x_t^\top A^{-1}_t \x_t+X\sum^T_{t=1}\lambda_t\\
&&\hspace{-0.2in}\le\frac{\w^\top A_T\w}{2\eta}+\frac{\eta}{2}\sum^T_{t=1}\x_t^\top A^{-1}_t \x_t+\lambda X[\log(T)+1],
\end{eqnarray*}
where the final inequality used  $\sum^T_{t=1}\frac{1}{t}\le [\log(T)+1]$. Secondly, the second term of the right hand side can be upper bounded as
\begin{eqnarray*}
&&\hspace{-0.4in}\sum^T_{t=1}\x_t^\top A^{-1}_t \x_t=r\sum^T_{t=1}(1-\frac{\det(A_{t-1})}{\det(A_t)})\nonumber\\
&&\hspace{-0.2in}\le- r\sum^T_{t=1}\log(\frac{\det(A_{t-1})}{\det(A_t)})=r\log(\det(A_T)).
\end{eqnarray*}
Combining the above two inequalities gives
\begin{eqnarray}\label{eqn:temp-bound}
&&\hspace{-0.3in}R_T \nonumber\le\frac{\w^\top A_T\w}{2\eta}\\
&&\hspace{-0.0in}+\frac{\eta}{2}r\log(\det(A_T))+\lambda X[\log(T)+1].
\end{eqnarray}
Since $A_T=I+\sum^T_{t=1}\frac{\x_t\x_t^\top}{r}$, its eigenvalue $\mu_i$ satisfies
\begin{eqnarray*}
\mu_i\le 1+ trace(\sum^T_{t=1}\frac{\x_t\x_t^\top}{r})=1+ \sum^T_{t=1}\frac{\|\x_t\|^2_2}{r}.
\end{eqnarray*}
As a result, we have
\begin{eqnarray*}
\det(A_T)=\prod^d_{i=1}\mu_i\le (1+\frac{X^2}{r}T)^d.
\end{eqnarray*}
Plugging the above inequality into~(\ref{eqn:temp-bound}) will concludes this theorem.
\end{proof}
{\bf Remark:} According to this theorem, adopt the second order information for the sparse online learning does further minimize the regret bound to an order of $O(\log(T))$.

\subsection{Diagonal Algorithm}
Although the previous second order algorithm significantly reduced the regret bound than the first order algorithm, it will consume $O(d^2)$ time, which limits its application to real-world high dimension problems. To keep the computational time still $O(d)$ similar with the traditional online learning, we further explored its diagonal version, which will only maintain a diagonal matrix. Its details are in the Algorithm~(\ref{alg:diagonal}).
\begin{algorithm}[htpb]
\caption{Diagonal Second Order Sparse Online Learning}\label{alg:diagonal}
\begin{algorithmic}
\STATE\textbf{INPUT :$\lambda$, $\eta$}
\STATE\textbf{INITIALIZATION :} $\theta_1=0$.
\FOR{$t=1,\ldots, T$}
\STATE receive  $\x_t\in \R^d$;
\STATE $A^{-1}_t=A^{-1}_{t-1}-\frac{A^{-1}_{t-1}\diag(\x_t\x_t^\top) A^{-1}_{t-1}}{r+\x^\top_t A^{-1}_{t-1}\x_t}$;
\STATE $\u_t=A^{-1}_t \theta_t$;
\STATE $\w_t=\sign(\u_t)\odot[|\u_t|-\lambda_t]_+$;
\STATE predict $\hy_t=\sign(\w_t^\top\x_t)$ and receive $y_t\in\{-1,1\}$;
\STATE suffer $\ell_t(\w_t)=[1-y_t\w_t^\top\x_t]_+$;
\STATE $\theta_{t+1}=\theta_t+ \eta L_t y_t \x_t$;
\ENDFOR
\end{algorithmic}
\end{algorithm}

In the following experiments, we mainly adopt the diagonal second order sparse online learning algorithm unless otherwise specified, which is also denoted as ``SSOL".

\if 0
\begin{thm}
Let $(\x_1,y_1),\ldots,(\x_T,y_T)$ be a sequence of examples, where $\x_t\in \R^d$, $y_t\in\{-1,+1\}$ and $\|\x_t\|_1\le X$ for all $t$. If further set $\lambda_t=\lambda/t$, then the regret $R_T=\sum^T_{t=1}\ell_t(\w_t)-\min_{\w}\sum^T_{t=1}\ell_t(\w) $ suffered by the algorithm~(\ref{alg:diagonal}) is bounded as follows:
\begin{eqnarray}
&&\hspace{-0.3in}R_T \nonumber\le\frac{\frac{1}{2}(\|\w\|^2_2+\frac{\sum^T_{t=1}(\w^\top\x_t)^2}{r})}{\eta}\\
&&\hspace{-0.0in}+\frac{\eta}{2}rd\log((1+\frac{X^2}{r}T))+\lambda X[\log(T)+1]
\end{eqnarray}
\end{thm}
\fi

\subsection{Cost-Sensitive Algorithm}
For the previous algorithms, the classifier is cost-insensitive, which suffers the same \emph{cost/lost} when the positive samples and the negative samples are misclassified. It is inappropriate for many data stream classification tasks in real-world applications, such as online anomaly detection, where the class distribution is often highly imbalanced. In this section, we propose a cost-sensitive sparse online classification algorithm by extending the sparse online learning framework for online anomaly detection tasks. Without loss of generality, we assume the positive class is the rare class in a set of streaming data, which contains more positive examples than negative samples. We will prefer a high \emph{cost/lost} value when a positive sample is misclassified, while a small \emph{cost/lost} value when a negative sample is misclassified.

Specifically, we respectively denote the number of positive samples and negative sample by $T_+$ and $T_-$; and $M_+, M_-$ are the number of false negative and false positive, respectively. We denote $T = T_+ + T_-$ and $M = M_+ + M_-$. Instead of using the cost-insensitive metric $accuracy = \frac{T - M}{T}$, researchers have proposed a variety of cost-sensitive metrics. One well-know cost-sensitive metric is the weighted \emph{sum} of $sensitivity = \frac{T_+ - M_+}{T_+}$ and $specificity=\frac{T_- - M_-}{T_-}$, which is defined as follows:
$$ sum = \mu_+ \frac{T_+ - M_+}{T_+} + \mu_- \frac{T_- - M_-}{T_-},$$
where $\mu_+ + \mu_- =1$ and $0 \leq \mu_+,\mu_- \leq 1$ are two parameters to trade off between sensitivity and specificity. In general, the higher the \emph{sum} value, the better the classification performance. Notably, when $\mu_+ = \mu_- = 0.5$, the corresponding sum is the well known balanced accuracy~\cite{Brodersen:2010:BAP:1904935.1905533}.

In general, the higher the \emph{sum} value, the better the classification performance. To maximize the sum value, based on the previous framework, we propose a cost-sensitive sparse online classification algorithm following the theoretical analysis in~\cite{DBLP:journals/tkde/WangZH14, Zhao:2013:COA:2487575.2487647}. In particular, we adopt a modified hinge loss function:
$$ (\rho  \I_{y_t=1} + \I_{y_t=-1})  [1 - y_t \w^\top \x_t]_{+},$$
where $\rho = \frac{\mu_+ T_-}{\mu_- T_+}$ and $\I_{v}$ is an indicator function, which $\I_{v}=1$ if $v$ is true, otherwise $\I_{v}=0$. In our experiment, we use the balance accuracy as the metric and set $\mu_+ = \mu_- = 0.5$. Generally, it is difficult to predict the number of positive and negative samples $T_+$ and $T_-$ in advance. So a more realistic setting is to use two weight parameters $c_+$ and $c_-$ for the positive and negative losses, respectively. Hence, the loss function is reformulated as:
$$(c_+  \I_{y_t=1} + c_-  \I_{y_t=-1})  [1 - y_t \w^\top \x_t]_{+}.$$
Denoting $c_t = c_+  \I_{y_t=1} + c_-  \I_{y_t=-1}$, the modified regret is derived as follow:
$$R_T = \sum_t c_t \ell_t(w_t) - \sum_t c_t \ell_t(w),$$
where $\ell_t(\w_t)=[1-y_t\w_t^\top\x_t]_+$.

Based on the proposed sparse online learning framework and the cost-sensitive loss function, we can achieve the cost-sensitive first order sparse online learning algorithm (CS-FSOL) shown in Algorithm~\ref{alg:cs-fsol}.
\begin{algorithm}[htpb]
\caption{Cost-Sensitive First Order Sparse Online Learning (CS-FSOL)}\label{alg:cs-fsol}
\begin{algorithmic}
\STATE\textbf{INPUT : $\lambda$, $\eta$, $c_{+1}$, $c_{-1}$ }
\STATE\textbf{INITIALIZATION :} $\theta_1=0$, $A_0^{-1} = I$.
\FOR{$t=1,\ldots, T$}
\STATE receive  $\x_t\in \R^d$;
\STATE $\w_t=\sign(\theta_t)\odot[|\theta_t|-\lambda_t]_+$;
\STATE predict $\hy_t=\sign(\w_t^\top\x_t)$ and receive $y_t\in\{-1,1\}$;
\STATE suffer $\ell_t(\w_t)=[1-y_t\w_t^\top\x_t]_+$;
\STATE $\theta_{t+1}=\theta_t+ \eta c_{y_t} L_t y_t \x_t$;
\ENDFOR
\end{algorithmic}
\end{algorithm}

For this algorithm, it is easy to observe that if we treat $\eta c_{y_t}$ as $\eta_t$, then it is the special case of the proposed framework~(\ref{alg:general_frame}) with $\Phi_t(\w)=\frac{1}{2}\|\w\|^2_2$. So, we would like to prove a new corollary for  the proposed framework~(\ref{alg:general_frame}), under the situation that $\eta_t = \eta c_{y_t}$. This can be achieved  by combining Lemma~\ref{lem:framework} with   $\eta c_{y_t}[\ell_t(\w_t)-\ell_t(\w)]\le \eta_t (\w_t-\w)^\top\z_t$. Specifically, we have the following corollary:

\begin{cor}\label{cor:regret-csol}
Under the assumptions of Lemma 1, if we further assume $\ell$ is convex and $\eta_t=\eta c_{y_t}$, then the regret $R_T=\sum^T_{t=1}c_{y_t}\ell_t(\w_t)-\min_{\w}\sum^T_{t=1}c_{y_t}\ell_t(\w) $ of the proposed framework~(\ref{alg:general_frame}) satisfies the following inequality
\begin{eqnarray}
&&\hspace{-0.4in}R_T\le\nonumber\\
&&\hspace{-0.4in}\frac{\Phi_T(\w)}{\eta}+\sum^T_{t=1}[\frac{\eta}{2\delta}\|c_{y_t}\z_t\|^2_{\Phi^*_t}+\lambda_t\|c_{y_t}\z_t\|_1]+\frac{\sum^T_{t=1}\Delta^*_t}{\eta},
\end{eqnarray}
where $\Delta^*_t=\Phi^*_t(\theta_t)-\Phi^*_{t-1}(\theta_t)$.
\end{cor}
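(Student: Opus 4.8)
The plan is to reprise the argument that established Corollary~\ref{cor:regret}, with the single modification that the constant step size $\eta$ is replaced by the per-round weight $\eta_t = \eta c_{y_t}$. Since Lemma~\ref{lem:framework} is stated for an arbitrary sequence of step sizes $\eta_t$, I would simply invoke it with this choice; its conclusion then holds verbatim and reads
\[
\sum_{t=1}^T \eta c_{y_t}(\w_t-\w)^\top\z_t \le \Phi_T(\w) + \sum_{t=1}^T\Big[\Delta^*_t + \frac{(\eta c_{y_t})^2}{2\delta}\|\z_t\|^2_{\Phi^*_t} + \eta c_{y_t}\lambda_t\|\z_t\|_1\Big].
\]

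The key step is to lower bound the left-hand side by the weighted regret. Because each $\ell_t$ is convex and $\z_t=\nabla\ell_t(\w_t)$, the subgradient inequality gives $\ell_t(\w_t)-\ell_t(\w)\le(\w_t-\w)^\top\z_t$; multiplying by the nonnegative scalar $\eta c_{y_t}$ and summing over $t$ yields $\eta\sum_t c_{y_t}[\ell_t(\w_t)-\ell_t(\w)]\le\sum_t \eta c_{y_t}(\w_t-\w)^\top\z_t$, which is exactly the quantity the display above bounds. This is the step flagged in the text, namely $\eta c_{y_t}[\ell_t(\w_t)-\ell_t(\w)]\le\eta_t(\w_t-\w)^\top\z_t$, and it is where positivity of $c_{y_t}=c_+\I_{y_t=1}+c_-\I_{y_t=-1}$ is essential: only a nonnegative multiplier preserves the direction of the convexity inequality.

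It then remains to divide through by $\eta$ and rewrite the scalar factors. Dividing turns $\Phi_T(\w)$ into $\Phi_T(\w)/\eta$, turns $\sum_t\Delta^*_t$ into $\sum_t\Delta^*_t/\eta$, turns the quadratic coefficient $(\eta c_{y_t})^2/(2\delta)$ into $\eta c_{y_t}^2/(2\delta)$, and turns $\eta c_{y_t}\lambda_t$ into $c_{y_t}\lambda_t$. I would then absorb the positive weights into the norms via the homogeneity identities $c_{y_t}^2\|\z_t\|^2_{\Phi^*_t}=\|c_{y_t}\z_t\|^2_{\Phi^*_t}$ and $c_{y_t}\|\z_t\|_1=\|c_{y_t}\z_t\|_1$, producing precisely the bracketed terms $\frac{\eta}{2\delta}\|c_{y_t}\z_t\|^2_{\Phi^*_t}+\lambda_t\|c_{y_t}\z_t\|_1$ of the claim. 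Since the bound holds for every fixed $\w$, I would finish by taking $\w$ to be the minimizer of $\sum_t c_{y_t}\ell_t(\cdot)$, so that the left-hand side becomes the stated weighted regret $R_T$.

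As for difficulty, there is essentially no hard step: the result is a direct transcription of Corollary~\ref{cor:regret} under the substitution $\eta\mapsto\eta c_{y_t}$. The only point requiring care, and the one I would state explicitly rather than gloss over, is the twofold use of $c_{y_t}>0$ — once to keep the convexity inequality pointing the right way after multiplication, and once to move $c_{y_t}$ inside the $\ell_1$ norm (the $\Phi^*_t$-norm term is unproblematic for any scalar, being squared).
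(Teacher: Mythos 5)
Your proposal is correct and follows exactly the route the paper indicates: invoke Lemma~\ref{lem:framework} with the per-round choice $\eta_t=\eta c_{y_t}$, lower bound the left-hand side via $\eta c_{y_t}[\ell_t(\w_t)-\ell_t(\w)]\le \eta_t(\w_t-\w)^\top\z_t$, then divide by $\eta$ and absorb the weights into the norms. Your explicit remarks on where nonnegativity of $c_{y_t}$ is used (preserving the convexity inequality and the $\ell_1$-homogeneity step) are details the paper glosses over, but the argument is the same.
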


Given the above corollary, we can prove  the following theorem for Algorithm~\ref{alg:cs-fsol}.

\begin{thm}
Let $(\x_1,y_1),\ldots,(\x_T,y_T)$ be a sequence of examples, where $\x_t\in \R^d$, $y_t\in\{-1,+1\}$ and $\|\x_t\|_1\le X$ for all $t$. If further set $\lambda_t=\eta\lambda$, then the regret $R_T=\sum^T_{t=1}c_{y_t}\ell_t(\w_t)-\min_{\w}\sum^T_{t=1}c_{y_t}\ell_t(\w) $ suffered by the algorithm~(\ref{alg:cs-fsol}) is bounded as follows:
\begin{eqnarray*}
R_T\le\frac{\frac{1}{2}\|\w\|^2_2}{\eta}+\frac{\eta}{2}\sum^T_{t=1}c_{y_t}X^2+\sum^T_{t=1}\eta\lambda c_{y_t} X,
\end{eqnarray*}
for any $\w\in\R^d$.

Further setting $\eta=\frac{\|\w\|_2}{\sqrt{ (X^2+2\lambda X)(T_+c_++T_-c_-)}}$, we could have
\begin{eqnarray*}
R_T\le D\sqrt{ (X^2+2\lambda X)(T_+c_++T_-c_-)},
\end{eqnarray*}
for any $\w\in\{\w\ |\|\w\|_2\le D\}$.
\end{thm}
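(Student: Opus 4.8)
The plan is to run the same argument that proves Theorem~1, but with Corollary~\ref{cor:regret} replaced by the cost-sensitive Corollary~\ref{cor:regret-csol}, since Algorithm~\ref{alg:cs-fsol} is precisely the instance of the general framework~\ref{alg:general_frame} obtained by taking $\Phi_t(\w)=\frac{1}{2}\|\w\|^2_2$ together with the per-round step sizes $\eta_t=\eta c_{y_t}$. First I would record the special structure of this $\Phi_t$: it is $1$-strongly convex with respect to $\|\cdot\|_2$ (so $\delta=1$), its conjugate is $\Phi^*_t(\theta)=\frac{1}{2}\|\theta\|^2_2$ with the self-dual Euclidean norm, and, crucially, $\Phi^*_t$ does not depend on $t$. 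Hence $\Delta^*_t=\Phi^*_t(\theta_t)-\Phi^*_{t-1}(\theta_t)=0$ for every $t$, exactly as in the cost-insensitive first-order case, which annihilates the last term of Corollary~\ref{cor:regret-csol}.

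Next I would identify the subgradient. For the hinge loss, whenever $\ell_t(\w_t)>0$ we have $\z_t=\nabla\ell_t(\w_t)=-y_t\x_t$ (and the update is skipped otherwise, as encoded by $L_t$), so $\|\z_t\|_2=\|\x_t\|_2\le\|\x_t\|_1\le X$ and $\|\z_t\|_1=\|\x_t\|_1\le X$. Substituting $\Phi_T(\w)=\frac{1}{2}\|\w\|^2_2$, $\delta=1$, $\Delta^*_t=0$, and these two norm bounds into Corollary~\ref{cor:regret-csol}, and then setting $\lambda_t=\eta\lambda$, produces the first displayed bound $R_T\le\frac{\|\w\|^2_2/2}{\eta}+\frac{\eta}{2}\sum_t c_{y_t}X^2+\sum_t\eta\lambda c_{y_t}X$ for arbitrary $\w\in\R^d$. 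The one genuinely delicate bookkeeping point is how the per-round weight $c_{y_t}$ propagates through the squared-norm term $\|c_{y_t}\z_t\|^2_{\Phi^*_t}$ and the $\ell_1$ term $\|c_{y_t}\z_t\|_1$ of the corollary; this is exactly where the cost-sensitive bound diverges from Theorem~1, and where the factors of $c_{y_t}$ must be tracked with care to land on the stated form.

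Finally I would convert the first bound into the tuned bound. The key observation is that $\sum^T_{t=1}c_{y_t}=T_+c_++T_-c_-$, since summing the weight $c_+$ over the $T_+$ positive rounds and $c_-$ over the $T_-$ negative rounds recovers exactly this quantity; this collapses the right-hand side to $\frac{\|\w\|^2_2}{2\eta}+\frac{\eta}{2}(X^2+2\lambda X)(T_+c_++T_-c_-)$. Minimizing the elementary expression $a/\eta+b\eta$ over $\eta>0$ at $\eta=\sqrt{a/b}$, with $a=\frac{1}{2}\|\w\|^2_2$ and $b=\frac{1}{2}(X^2+2\lambda X)(T_+c_++T_-c_-)$, yields the stated step size and the bound $R_T\le D\sqrt{(X^2+2\lambda X)(T_+c_++T_-c_-)}$ after restricting to $\|\w\|_2\le D$. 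I expect the only real obstacle to be the weight-tracking flagged above; the remainder is a direct specialization of the framework followed by a one-variable optimization.
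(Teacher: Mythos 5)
Your plan follows exactly the route the paper intends (the paper omits this proof entirely, pointing only to Corollary~\ref{cor:regret-csol}), and most of your individual steps are sound: $\Delta^*_t=0$ for $\Phi_t(\w)=\frac{1}{2}\|\w\|_2^2$, the subgradient identification $\z_t=-y_t\x_t$ with $\|\z_t\|_2\le\|\z_t\|_1\le X$, the identity $\sum_{t=1}^Tc_{y_t}=T_+c_++T_-c_-$, and the closing one-variable optimization over $\eta$. The problem is the single step you flag as ``delicate bookkeeping'' and then do not carry out: it does not land on the stated form. In Corollary~\ref{cor:regret-csol} the cost sits \emph{inside} the squared dual norm, and since the per-round step size enters Lemma~\ref{lem:framework} as $\eta_t^2$ in the quadratic term, specializing $\eta_t=\eta c_{y_t}$ gives $\frac{\eta}{2\delta}\|c_{y_t}\z_t\|^2_{\Phi^*_t}=\frac{\eta}{2}c_{y_t}^2\|\z_t\|_2^2$. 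Carrying the computation through honestly therefore yields
\begin{eqnarray*}
R_T\;\le\;\frac{\frac{1}{2}\|\w\|_2^2}{\eta}+\frac{\eta}{2}\sum_{t=1}^{T}c_{y_t}^{2}X^2+\sum_{t=1}^{T}\eta\lambda c_{y_t}X,
\end{eqnarray*}
with $c_{y_t}^2$ rather than the first power $c_{y_t}$ claimed in the theorem. The two coincide only under an unstated normalization $c_+,c_-\le 1$ (so that $c_{y_t}^2\le c_{y_t}$), which is at odds with the paper's own motivation, where the positive-class cost plays the role of $\rho=\mu_+T_-/(\mu_-T_+)$ and is typically much larger than $1$ on imbalanced streams.

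This gap propagates into your second step: with the $c_{y_t}^2$ term the right-hand side no longer collapses to $\frac{\|\w\|_2^2}{2\eta}+\frac{\eta}{2}(X^2+2\lambda X)(T_+c_++T_-c_-)$, so the stated choice of $\eta$ does not produce $R_T\le D\sqrt{(X^2+2\lambda X)(T_+c_++T_-c_-)}$. What your argument actually proves, after bounding $c_{y_t}^2\le c_{\max}c_{y_t}$ with $c_{\max}=\max(c_+,c_-)$ and re-optimizing $\eta$, is
\begin{eqnarray*}
R_T\;\le\;D\sqrt{(c_{\max}X^2+2\lambda X)(T_+c_++T_-c_-)},
\end{eqnarray*}
which recovers the theorem only when $c_{\max}\le1$. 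To make the proof complete you must either add that hypothesis explicitly or settle for this $c_{\max}$-inflated bound; as written, the assertion that careful tracking of the $c_{y_t}$ factors reaches the stated inequality is precisely the missing step, and it is false without further assumptions. (The same issue is latent in the paper itself, which asserts the result is ``easy'' without proof; your proposal, by deferring exactly this point, inherits rather than resolves it.)
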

We omit the proof, since it is easy.

In addition, we can also get the cost-sensitive  second order sparse online classification (CS-SSOL) algorithm shown in Algorithm~\ref{alg:cs-ssol}. However, its time complexity and space complexity are relatively high for high dimension datasets, we will only use its diagonal variant in practice, where only a diagonal $A_t^{-1}$ is maintained and updated.

\begin{algorithm}[htpb]
\caption{Cost-Sensitive Second Order Sparse Online Learning (CS-SSOL)}\label{alg:cs-ssol}
\begin{algorithmic}
\STATE\textbf{INPUT : $\lambda$, $\eta$, $c_{+1}$, $c_{-1}$ }
\STATE\textbf{INITIALIZATION :} $\theta_1=0$, $A_0^{-1} = I$.
\FOR{$t=1,\ldots, T$}
\STATE receive  $\x_t\in \R^d$;
\STATE $A^{-1}_t=A^{-1}_{t-1}-\frac{A^{-1}_{t-1}\x_t\x_t^\top A^{-1}_{t-1}}{r+\x^\top_t A^{-1}_{t-1}\x_t}$;
\STATE $\u_t=A^{-1}_t \theta_t$;
\STATE $\w_t=\sign(\u_t)\odot[|\u_t|-\lambda_t]_+$;
\STATE predict $\hy_t=\sign(\w_t^\top\x_t)$ and receive $y_t\in\{-1,1\}$;
\STATE suffer $\ell_t(\w_t)=[1-y_t\w_t^\top\x_t]_+$;
\STATE $\theta_{t+1}=\theta_t + \eta c_{y_t} L_t y_t \x_t$;
\ENDFOR
\end{algorithmic}
\end{algorithm}

It is easy to verify that this algorithm is the special case of the proposed framework~(\ref{alg:general_frame}), when $\eta_t=\eta c_{y_t}$ and $\Phi_t(\w)=\frac{1}{2}\w^\top A_t\w$. So, the corollary~\ref{cor:regret-csol} holds for this algorithm. Using this corollary, we can prove  the following theorem for Algorithm~\ref{alg:cs-ssol}.

\begin{thm}
Let $(\x_1,y_1),\ldots,(\x_T,y_T)$ be a sequence of examples, where $\x_t\in \R^d$, $y_t\in\{-1,+1\}$ and $\|\x_t\|_1\le X$ for all $t$. If further set $\lambda_t=\lambda/t$, then the regret $R_T=\sum^T_{t=1}c_{y_t}\ell_t(\w_t)-\min_{\w}\sum^T_{t=1}c_{y_t}\ell_t(\w) $ suffered by the algorithm~(\ref{alg:Second Order}) is bounded as
\begin{eqnarray}
&&\hspace{-0.3in}R_T \nonumber\le\frac{D^2}{2\eta}+c_{max}\frac{\eta}{2}rd\log(1+\frac{X^2}{r}T)+c_{max}\lambda X[\log(T)+1],
\end{eqnarray}
for any $\w\in\{\w\ |\w^\top A_T\w\le D^2\}$, where $c_{max}=\max(c_+,c_-)$.
\end{thm}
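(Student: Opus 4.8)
The plan is to treat this as the cost-sensitive counterpart of Theorem~\ref{thm:ssol}, since the text has already established that Algorithm~\ref{alg:cs-ssol} is the special case of the framework with $\eta_t=\eta c_{y_t}$ and $\Phi_t(\w)=\frac12\w^\top A_t\w$, so that Corollary~\ref{cor:regret-csol} applies verbatim. First I would instantiate that corollary with this choice of $\Phi_t$: since $\Phi_t$ is $1$-strongly convex with respect to $\|\w\|^2_{\Phi_t}=\w^\top A_t\w$ (so $\delta=1$), its conjugate is $\Phi^*_t(\theta)=\frac12\theta^\top A^{-1}_t\theta$ with dual norm $\|\cdot\|^2_{\Phi^*_t}=(\cdot)^\top A^{-1}_t(\cdot)$. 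The constant $\Phi_T(\w)/\eta$ then becomes $\frac{\w^\top A_T\w}{2\eta}\le\frac{D^2}{2\eta}$ by the constraint $\w^\top A_T\w\le D^2$, which produces the first term of the bound.

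Next I would dispose of the $\sum_t\Delta^*_t/\eta$ term. Writing $\z_t=-L_t y_t\x_t$ (the hinge subgradient, which vanishes when $\ell_t(\w_t)=0$), the increment $\Delta^*_t=\Phi^*_t(\theta_t)-\Phi^*_{t-1}(\theta_t)=\frac12\theta_t^\top(A^{-1}_t-A^{-1}_{t-1})\theta_t$ is handled by the same rank-one Woodbury computation used in Theorem~\ref{thm:ssol}, which shows $\Delta^*_t=-\frac{(\x_t^\top A^{-1}_{t-1}\theta_t)^2}{2(r+\x_t^\top A^{-1}_{t-1}\x_t)}\le0$, so this entire term may simply be dropped.

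The two remaining sums carry the cost weights, and this is where I would factor out $c_{max}=\max(c_+,c_-)$. For the curvature term, $\|c_{y_t}\z_t\|^2_{\Phi^*_t}=c_{y_t}^2 L_t\,\x_t^\top A^{-1}_t\x_t\le c_{max}\,\x_t^\top A^{-1}_t\x_t$, after which I reuse the log-determinant estimate of Theorem~\ref{thm:ssol} unchanged: $\sum_{t=1}^T\x_t^\top A^{-1}_t\x_t=r\sum_{t=1}^T(1-\frac{\det(A_{t-1})}{\det(A_t)})\le r\log\det(A_T)\le rd\log(1+\frac{X^2}{r}T)$, where the last step uses $\det(A_T)\le(1+\frac{X^2}{r}T)^d$ from the eigenvalue/trace bound. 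This yields the term $c_{max}\frac{\eta}{2}rd\log(1+\frac{X^2}{r}T)$. For the sparsity term, $\lambda_t\|c_{y_t}\z_t\|_1=\frac{\lambda}{t}c_{y_t}L_t\|\x_t\|_1\le c_{max}\lambda X\cdot\frac1t$, and $\sum_{t=1}^T\frac1t\le\log T+1$ gives $c_{max}\lambda X[\log T+1]$. Summing the three contributions produces the claimed bound.

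The only genuinely delicate point — and the one I would flag as the main obstacle — is reconciling the factor $c_{y_t}^2$ appearing in the dual-norm term of Corollary~\ref{cor:regret-csol} with the single power $c_{max}$ stated in the theorem. The step $c_{y_t}^2\le c_{max}$ is valid precisely when the costs are normalized so that $c_{y_t}\le1$ (then $c_{y_t}^2\le c_{y_t}\le c_{max}$); otherwise one would be left with $c_{max}^2$ in the curvature term. I would therefore make the normalization $0\le c_+,c_-\le1$ explicit (as is natural in this setting, e.g. $\mu_++\mu_-=1$), under which all cost bookkeeping collapses to the clean factor $c_{max}$ and the remainder of the argument is an exact transcription of the cost-insensitive Theorem~\ref{thm:ssol}.
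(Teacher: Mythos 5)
Your proof is correct and follows exactly the route the paper intends: the paper actually omits this proof, saying only that it ``mainly follows'' the one for Theorem~\ref{thm:ssol} via Corollary~\ref{cor:regret-csol}, which is precisely your instantiation ($\delta=1$, $\Phi_T(\w)/\eta\le D^2/(2\eta)$, $\Delta^*_t\le 0$ by the same Woodbury computation, the log-determinant estimate, and $\sum_{t=1}^T 1/t\le\log(T)+1$). Your flagged concern is also legitimate rather than a flaw in your argument: applying Corollary~\ref{cor:regret-csol} literally leaves $c_{y_t}^2\le c_{max}^2$ in the curvature term, so the single factor $c_{max}$ in the stated bound does require the normalization $c_+,c_-\le 1$ (or else $c_{max}$ should read $c_{max}^2$), a gap the paper silently skips over by omitting the proof.
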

The proof of this theorem is omitted, since it is easy and can mainly follows the one for Theorem~\ref{thm:ssol}.

\section{Experiments}

In this section, we conduct an extensive set of experiments to evaluate the performance of the proposed sparse online classification algorithms on both synthetic and real datasets.

\subsection{Experimental Setup}

In our experiments, we compare the proposed algorithms with a set of state-of-the-art algorithms, including the sparse online learning algorithms and the cost-sensitive online learning algorithms. The methodology details of these algorithms are listed in Table~\ref{tab:algs}. The three existing algorithms (CS-OGD, CPA and PAUM) are cost-sensitive online learning without sparsity regularizer.

\begin{table*}[htb]
\caption{List of Compared Algorithms.}\label{tab:algs}
\centering
\begin{tabular}{llll}
\toprule
Algorithm & 1st/2nd Order     & Sparsity              & Description                            \\ \midrule
STG       & First Order     & Truncate Gradient     & Stochastic Gradient Descent~\cite{langford-2009-sparse}   \\
FOBOS     & First Order     & Truncate Gradient     & FOrward Backward Splitting~\cite{duchi-2009-sparse}    \\
Ada-FOBOS & Second Order    & Truncate Gradient     & Adaptive regularized FOBOS~\cite{duchi2011adaptive}    \\
Ada-RDA   & Second Order    & Dual Averaging        & Adaptive regularized RDA~\cite{duchi2011adaptive}      \\
FSOL      & First Order     & Dual Averaging        & The proposed Algorithm~\ref{alg:First Order}\\
SSOL      & Second Order    & Dual Averaging        & The proposed Algorithm~\ref{alg:diagonal}  \\ \midrule
CS-OGD     & First Order     & Non-Sparse            & Cost-Sensitive Online Gradient Descent~\cite{DBLP:journals/tkde/WangZH14} \\
CPA       & First Order     & Non-Sparse            & Cost-Sensitive Passive-Aggressive~\cite{DBLP:journals/jmlr/CrammerDKSS06} \\
PAUM      & First Order     & Non-Sparse            & Cost-Sensitive Perceptron Algorithm with Uneven Margin~\cite{li2002perceptron} \\
CS-FSOL   & First Order     & Dual Averaging        & The proposed Algorithm~\ref{alg:cs-fsol}\\
CS-SSOL   & Second Order    & Dual Averaging        & The proposed Algorithm~\ref{alg:cs-ssol}\\
\bottomrule
\end{tabular}
\end{table*}

To examine the binary classification performance, beside the synthetic dataset, we evaluate all the previous algorithms on a number of benchmark datasets from web machine learning repositories. Table~\ref{tab:datasets} shows the details of all the datasets in our experiments. These datasets are selected to allow us evaluate the algorithms on various characteristics of data, in which the number of training examples ranges from thousands to millions, feature dimensionality ranges from hundreds to about 16-million, and the total number of non-zero features on some dataset is more than one billion. For the very large-scale WEBSPAM dataset, we run the algorithms only once. The sparsity as shown in the last column of the table denotes the ratio of non-active feature dimensions, as some feature dimensions are never active in the training process, which is often the case for some real-world high-dimensional dataset, such as WEBSPAM.

\begin{table*}[htbp]
\centering
\caption{List of real-world datasets in our experiments.}\label{tab:datasets}
\begin{tabular}{@{}llrrrrrl@{}}
\toprule
DataSet     & Balance & \#Train    & \#Test   & \#Feature Dimension &  \#Nonzero Features & Sparsity(\%)  & $T_+ \setminus T_-$  \\ \midrule
AUT         & True    & 40,000     & 22,581   & 20,707              &  1,969,407         & 3.07          & $1 \setminus 0.33$\\
PCMAC       & True    & 1,000      & 946      & 7,510               &  55,470            & 3.99          & $1 \setminus 1.00$\\
NEWS        & True    & 10,000     & 9,996    & 1,355,191           &  5,513,533         & 29.88         & $1 \setminus 1.50$\\
RCV1        & True    & 781,265    & 23,149   & 47,152              &  59,155,144        & 8.80          & $1 \setminus 1.11$\\
URL         & True    & 2,000,000  & 396,130  & 3,231,961           &  231,249,028       & 7.44          & $1 \setminus 2.02$\\
WEBSPAM     & True    & 300,000    & 50,000   & 16,071,971          &  1,118,027,721     & 95.82         & $1 \setminus 0.64$\\
URL2        & False   & 1,000,000  & 100,000  & 3,231,961           &  114,852,082       & 44.96         & $1 \setminus 99$\\
WEBSPAM2    & False   & 100,000    & 10,000   & 16,071,971          &  224,201,808       & 96.19         & $1 \setminus 99$\\
\bottomrule
\end{tabular}
\end{table*}

We conduct experiments by following standard online learning settings for training a classifier, where an online learner receives a single training example at each iteration and updates the model sequentially. We will examine how different sparsity levels affect test error rate of the classifier trained from a single pass through the training data. Besides, we also measure time cost of different algorithms to evaluate the computational efficiency. To make a fair comparison, all the algorithms adopt the same experimental settings. We use hinge loss as the loss function for the applicable algorithms. To identify the best set of parameters, for each algorithm on each dataset, we conduct a 5-fold cross validation for grid searching the parameters with the fixed sparsity regularization parameter $\lambda=0$. In particular, the learning rates are searched from $2^{-1}$ to $2^9$ and the other parameters are searched from $2^{-5}$ to $2^5$. With the best tuned parameters, each algorithm is evaluated for $5$ times with a random permutation of a train set. All the experiments were conducted on a Linux server (with Intel Xeon CPU E5-2620 @2.00GHz, 4 CPU cores,  8GB memory) and the programming environment is based on C++ implementation compiled by g++.

\subsection{Experiment on Synthetic Dataset}

To evaluate if the proposed sparse online learning algorithm is able to identify effective features for learning the models, we design the first experiment on a synthetic dataset, which allows us to control the exact numbers of {\it effective/noisy} feature dimensions. In particular, we generate a synthetic dataset with high dimensionality and high sparsity by following the similar scheme in~\cite{DBLP:conf/nips/CrammerDP08, crammer2009adaptive}, which contains a set of \emph{effective} feature dimensions that are correlated with the class labels and a set of \emph{noisy} feature dimensions that are uncorrelated with the labels.

Specifically, we generate the synthetic dataset with $100,000$ training examples and $10,000$ test examples in $\mathbb{R}^{1000}$. For each example, the first $100$ dimensions are drawn from a multivariate Gaussian distribution with diagonal covariance. Each dimension of the mean vector is uniformly sampled from $-1$ to $1$, and each dimension of covariance is uniformly sampled from $0.5$ to $100$. We generate the split plane the same as the mean vector. To introduce noisy feature dimensions, we randomly choose $200$ noise dimensions out of the rest 900 dimensions for each example. Noises are drawn from a Gaussian distribution of $\mathcal{N}(0,100)$.

\begin{figure}[htb]
\centering
\includegraphics[width=3.5in]{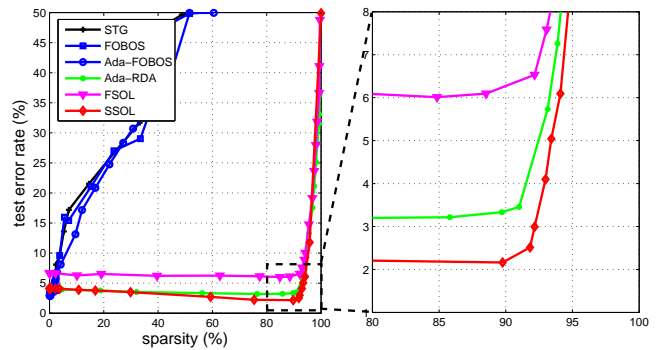}
\caption{Test error rate of sparse online classification on synthetic dataset.}\label{fig:synthetic_exp}
\end{figure}

We evaluate all the cost-insensitive sparse online classification algorithms on the synthetic dataset. Figure~\ref{fig:synthetic_exp} shows the test error rates of all the compared algorithms, where the right diagram is a sub-figure of the left one with sparsity from $80\%$ to $100\%$. Several observations can be drawn from the experimental results.

First of all, we observe that the test error rates of the \emph{truncate gradient} based algorithms (STG, FOBOS, Ada-FOBOS) decrease significantly when the sparsity level increases. By contrast,
for the \emph{dual averaging} based algorithms (FSOL, Ada-RDA, SSOL), the test error rates keep stable or even decrease when the sparsity level increases; But the test error rate increases dramatically when the sparsity level is higher than $90\%$---the actual sparsity level used for generating the synthetic data. The result indicates that the dual averaging based algorithms more effectively exploit the sparsity in the dataset. Similar observation was also reported in~\cite{xiao2010dual} who argued that the dual averaging based methods take more aggressive truncations and thus can generate significantly more sparse solutions. Second, the proposed second-order algorithm SSOL achieves the lowest error rate among all the compared algorithms, especially for high sparsity level. This observation can be seen more clearly in the right diagram of Figure~\ref{fig:synthetic_exp}.
The above encouraging experimental results indicate that the proposed SSOL algorithm can effectively exploit the sparsity for solving the sparse online classification tasks.

\subsection{Test Error Rate on Large Real Datasets} \label{sec:large_dataset}
\begin{figure*}[htbp]
\begin{center}
\includeMyGraphicA{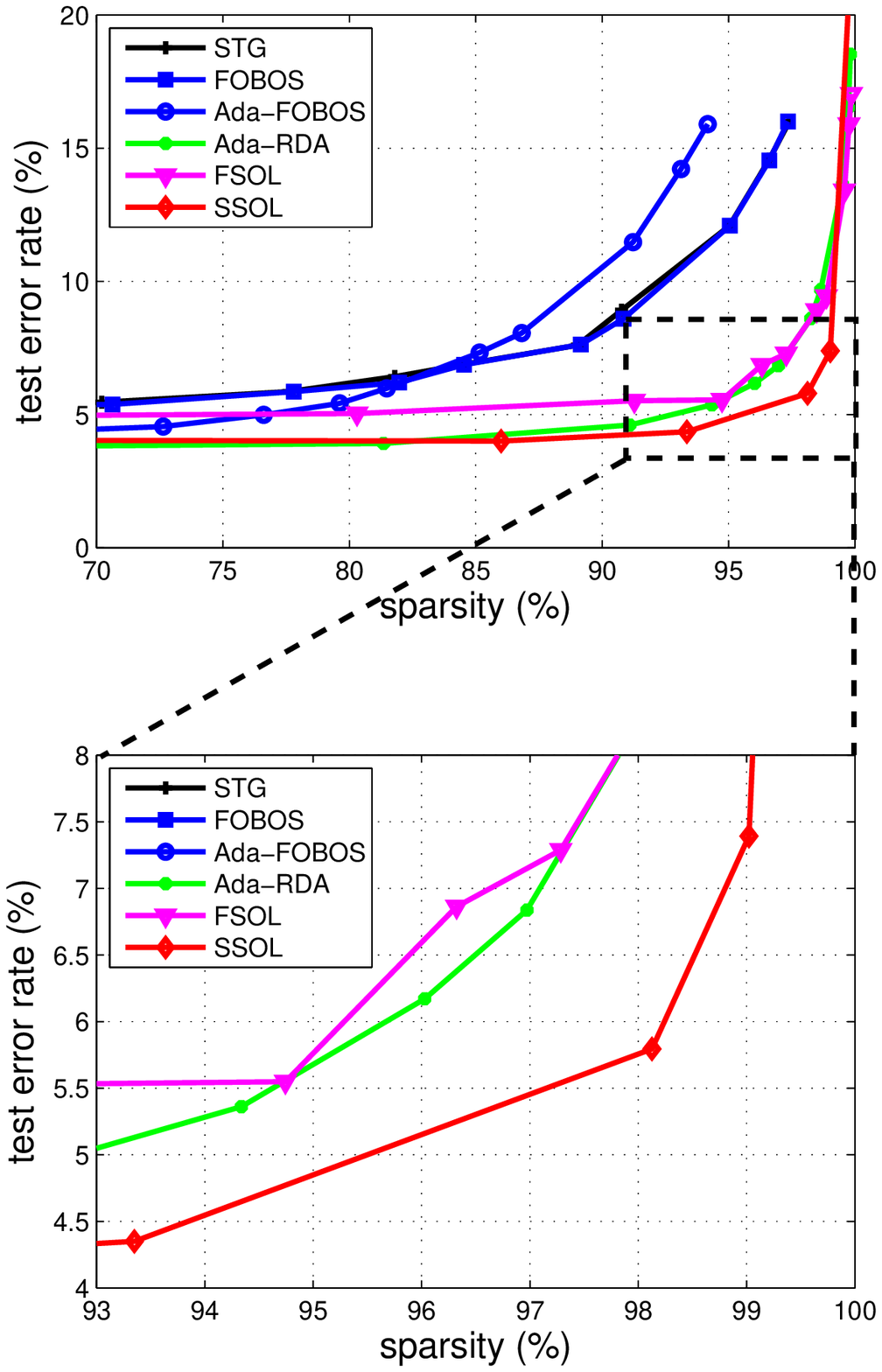}
\includeMyGraphicA{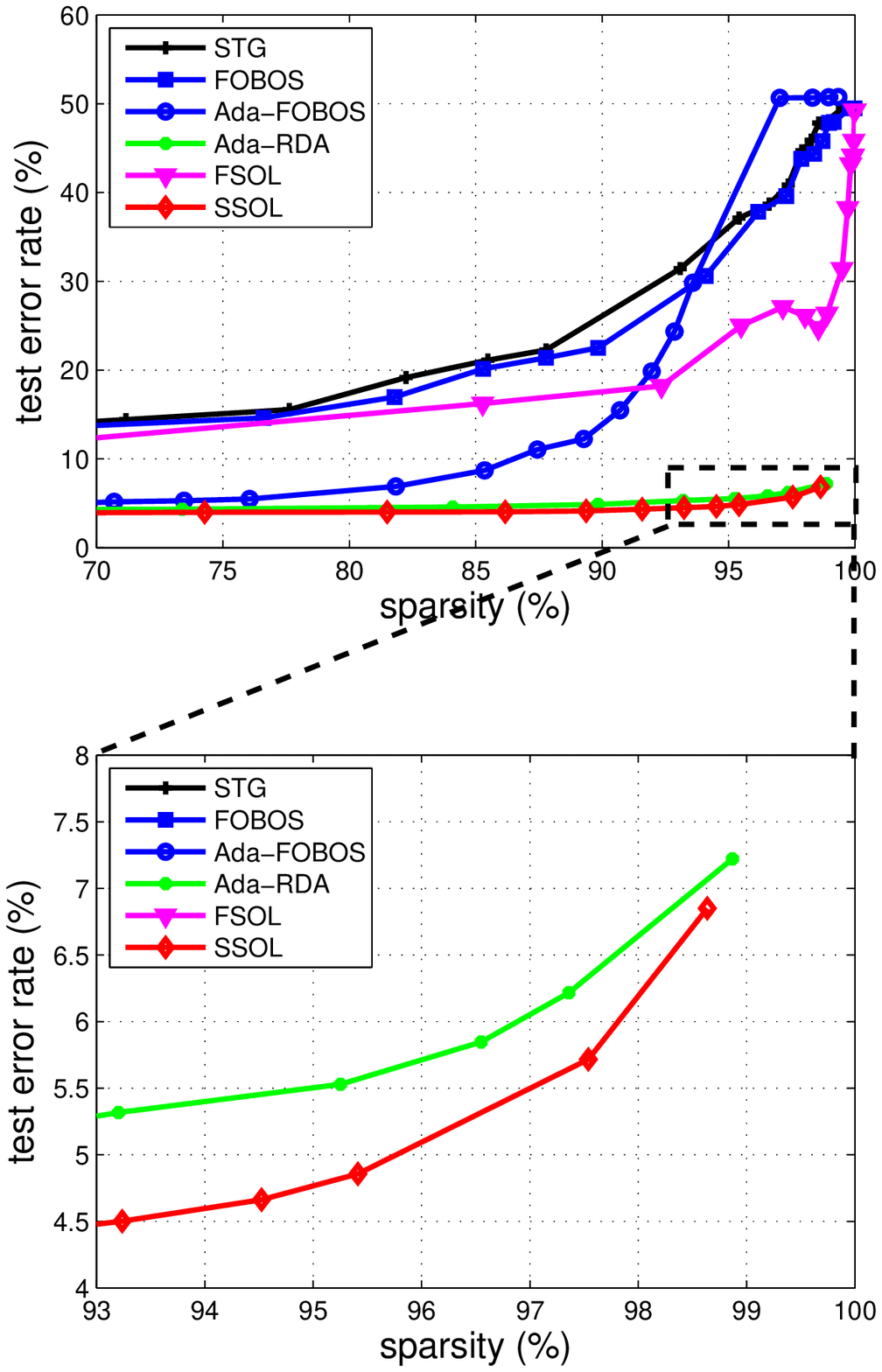}
\includeMyGraphicA{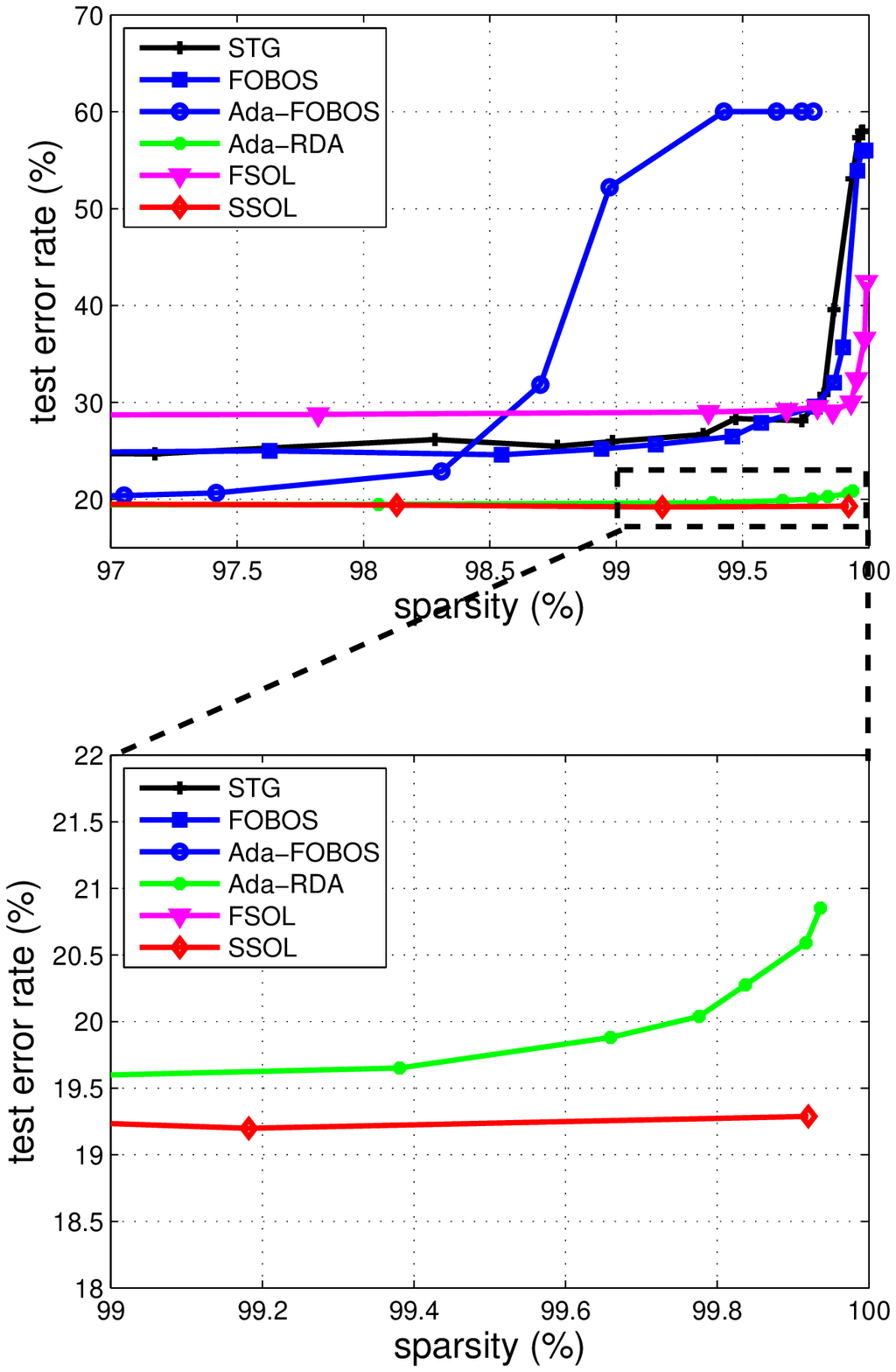} \\
\makeMyboxA{(a) AUT}
\makeMyboxA{(b) PCMAC}
\makeMyboxA{(c) NEWS} \\
\includeMyGraphicA{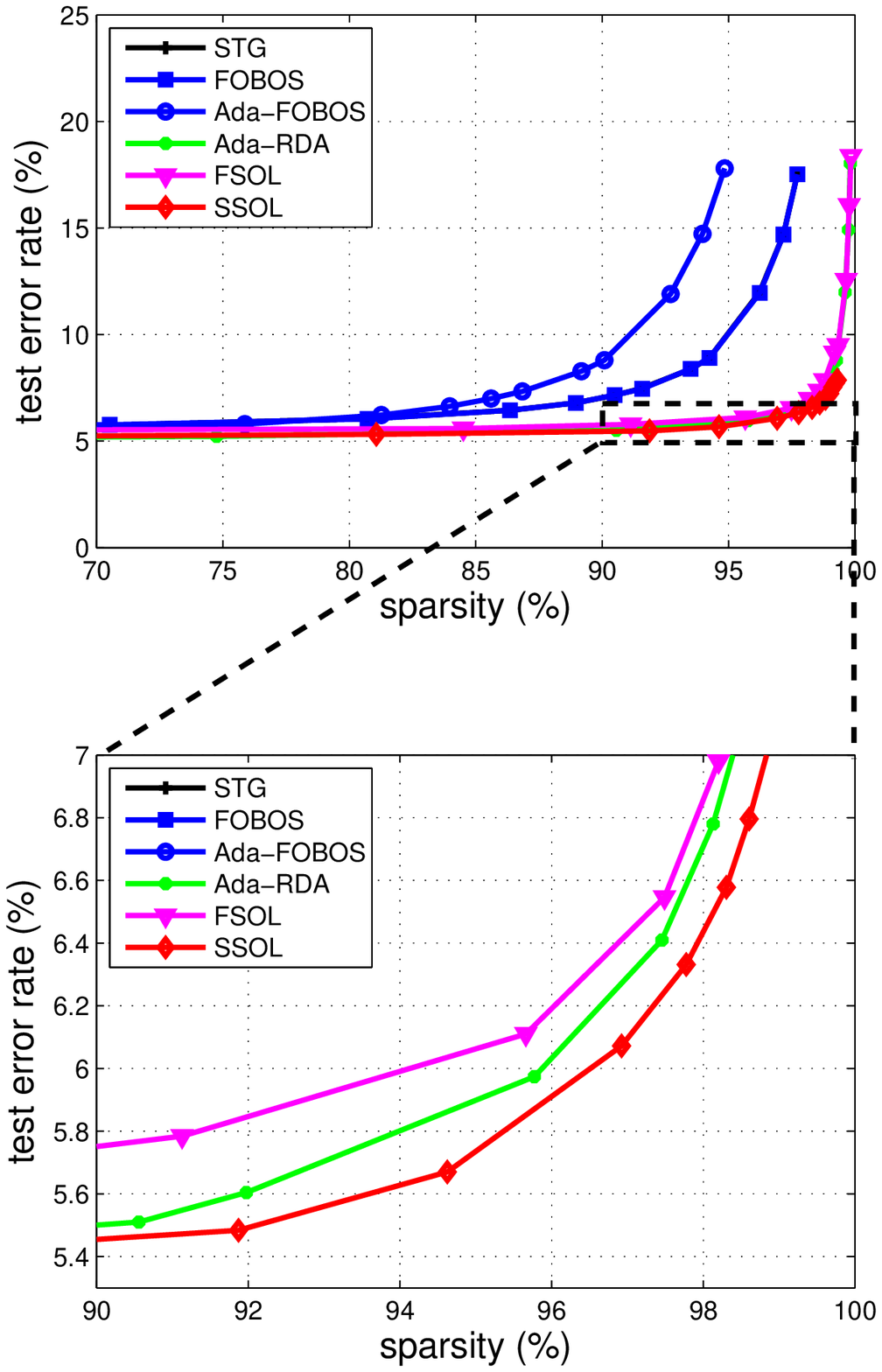}
\includeMyGraphicA{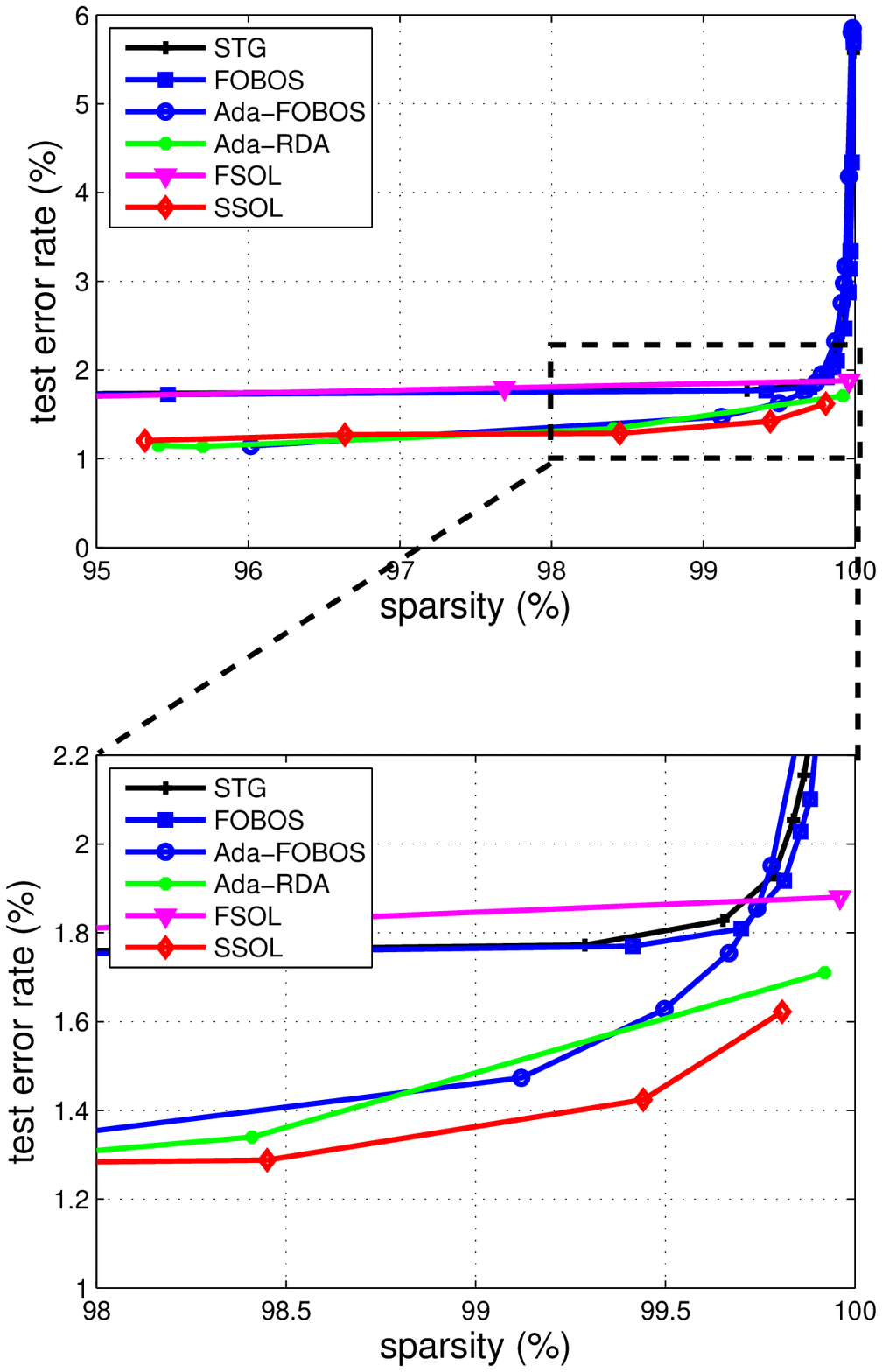}
\includeMyGraphicA{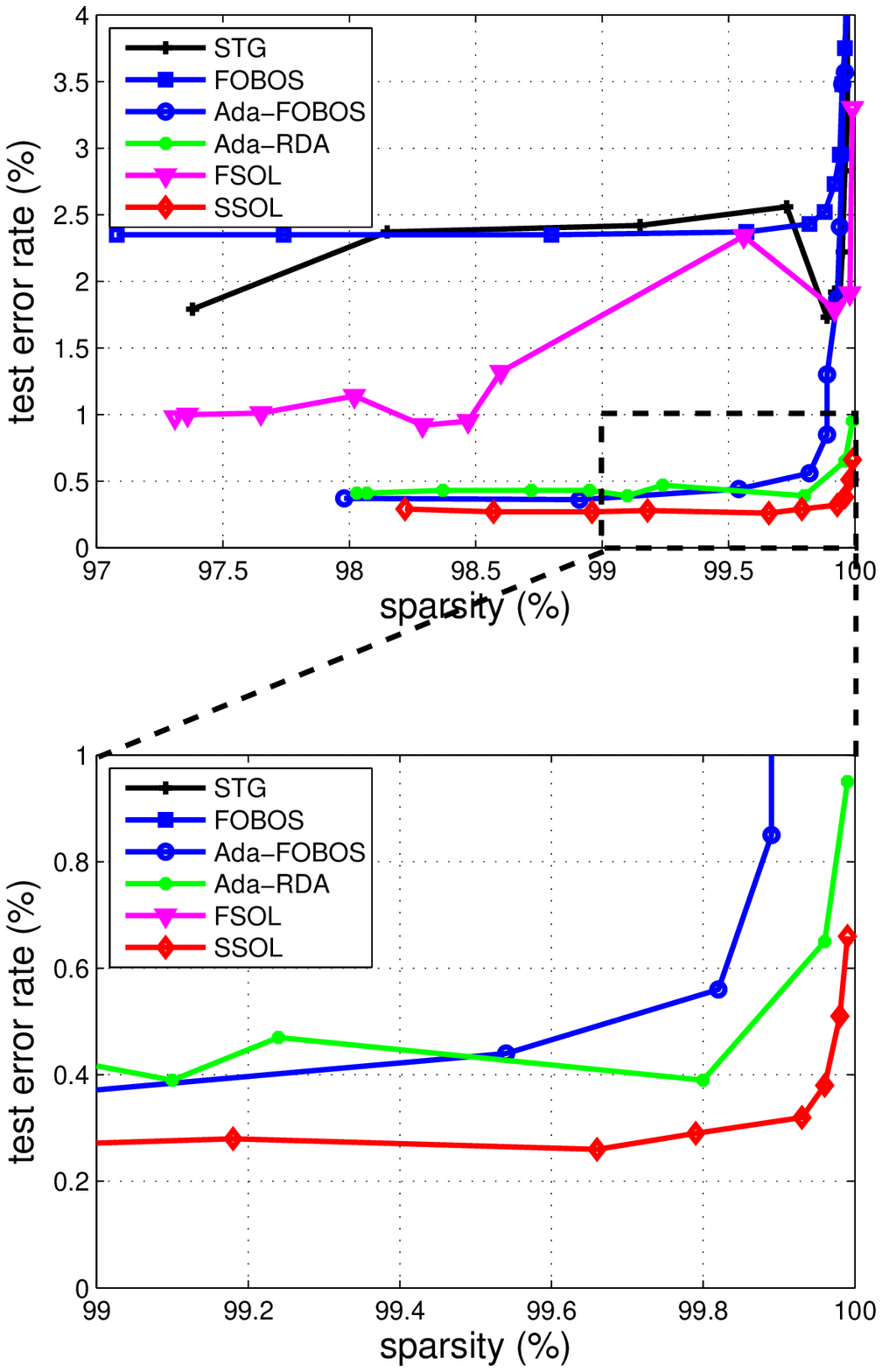} \\
\makeMyboxA{(d) RCV1}
\makeMyboxA{(e) URL}
\makeMyboxA{(f) WEBSPAM}
\caption{Test error rate on $6$ large real datasets. (a)-(b) are two general datasets, (c)-(f) are four large-scale high-dimensional sparse datasets. The second and forth rows are the sub-figures of the first and the third rows with high sparsity level, respectively.}\label{fig:err_rate}
\end{center}
\end{figure*}

In this experiment, we compare the proposed algorithms (FSOL and SSOL) with the other cost-insensitive algorithms on several real-world datasets. Table~\ref{tab:datasets} shows the details of six datasets, which can be roughly grouped into two major categories: the first two datasets (AUT and PCMAC) are general binary small-scale datasets and the corresponding experimental results are shown in Figure~\ref{fig:err_rate} (a)-(b); and the rest four datasets (NEWS, RCV1, URL, and WEBSPAM) are large-scale high-dimensional sparse datasets and the corresponding experimental results are shown in Figure~\ref{fig:err_rate} (c)-(f). We can draw several observation from these results as follows.

First of all, we observe that most algorithms can learn an effective sparse classification model with only marginal or even no loss of accuracy. For example, in Figure~\ref{fig:err_rate} (d), the performances of all the algorithms are almost stable when sparsity level is smaller than $80\%$. It indicates that all the compared sparse online classification algorithm can effectively explore the low level sparsity information.

Second, for most cases, we observe that there exists some sparsity threshold for each algorithm, below which test error rate does not change much; but when sparsity level is greater than the threshold, test error rate gets worse quickly.

Third, we observe that the dual averaging based second order algorithms (Ada-RDA and SSOL) consistently outperform the other algorithms (STG, FOBOS, FSOL, and Ada-FOBOS), especially for high sparsity level. This indicates that the dual averaging technique and second order updating rules are effective to boost the classification performance.

Finally, when the sparsity is high, an essential requirement for high-dimensional data stream classification tasks, the proposed SSOL algorithm consistently outperforms the other algorithms over all the evaluated datasets. For example, when the sparsity is about $99.8\%$ for the WEBSPAM dataset (the total feature dimensionality is $16,609,143$), the test error rate of SSOL is about $0.3\%$, while the Ada-RDA is $0.4\%$ and the Ada-FOBOS is $0.55\%$, as shown in Figure~\ref{fig:err_rate} (f).

\subsection{Running Time on Large Real Datasets}

We also examine time costs of different sparse online classification algorithms, and the experiment results are shown in Figure~\ref{fig:running_time}. In this experiment, we only adopt the four high-dimensional large-scale dataset. Several observations can be drawn from the results.

First of all, we observe that when the sparsity level is low, the time costs are generally stable; on the other hand, when the sparsity level is high, the time cost of the second other algorithms sometimes will somewhat increase. For example, the test costs of Ada-FOBOS, Ada-RDA and FSOL in Figure~\ref{fig:running_time} (b) \& (d). One possible reason may be that when the sparsity level is high, the model might not be informative enough for prediction and thus may suffer significant more updates. Since second-order algorithms are more complicated than first-order algorithms, they are more sensitive to the increasing number of updates.

Second, we can see that the proposed SSOL algorithm runs more efficiently than another second-order based algorithms (Ada-RDA and Ada-FOBOS). It is even sometimes better than the first order based algorithm (e.g. FOBOS and STD). However, the first order FSOL algorithm is consistently faster than the second order SSOL algorithm.

In summary, from the above analysis, we found that the proposed SSOL algorithm is able to achieve the comparable or even better accuracy of existing second-order algorithms, but has the comparably small time cost as state-of-the-art first-order algorithms with truncated gradient methods.

\begin{figure*}[htb]
\centering
\includeMyGraphicB{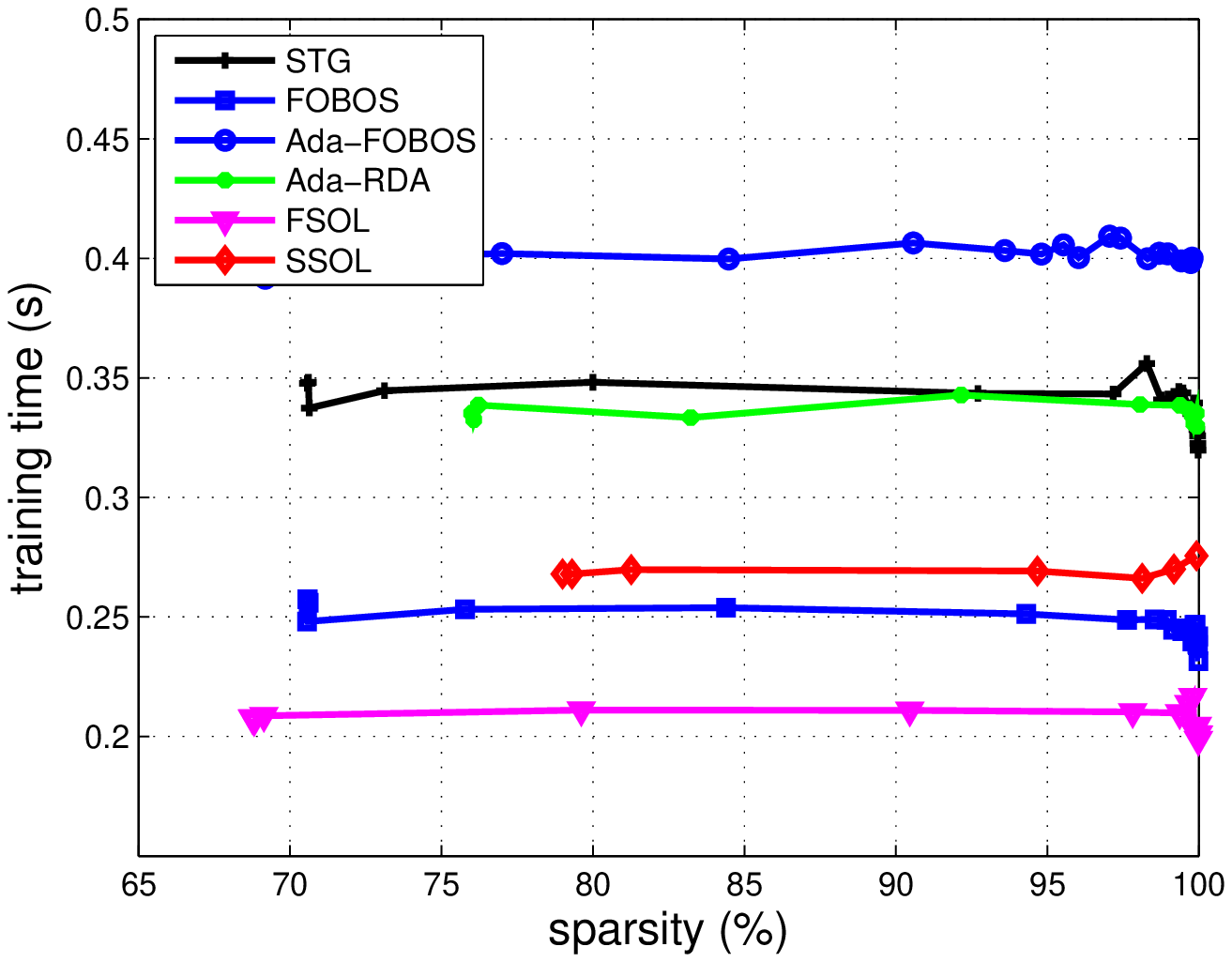}
\includeMyGraphicB{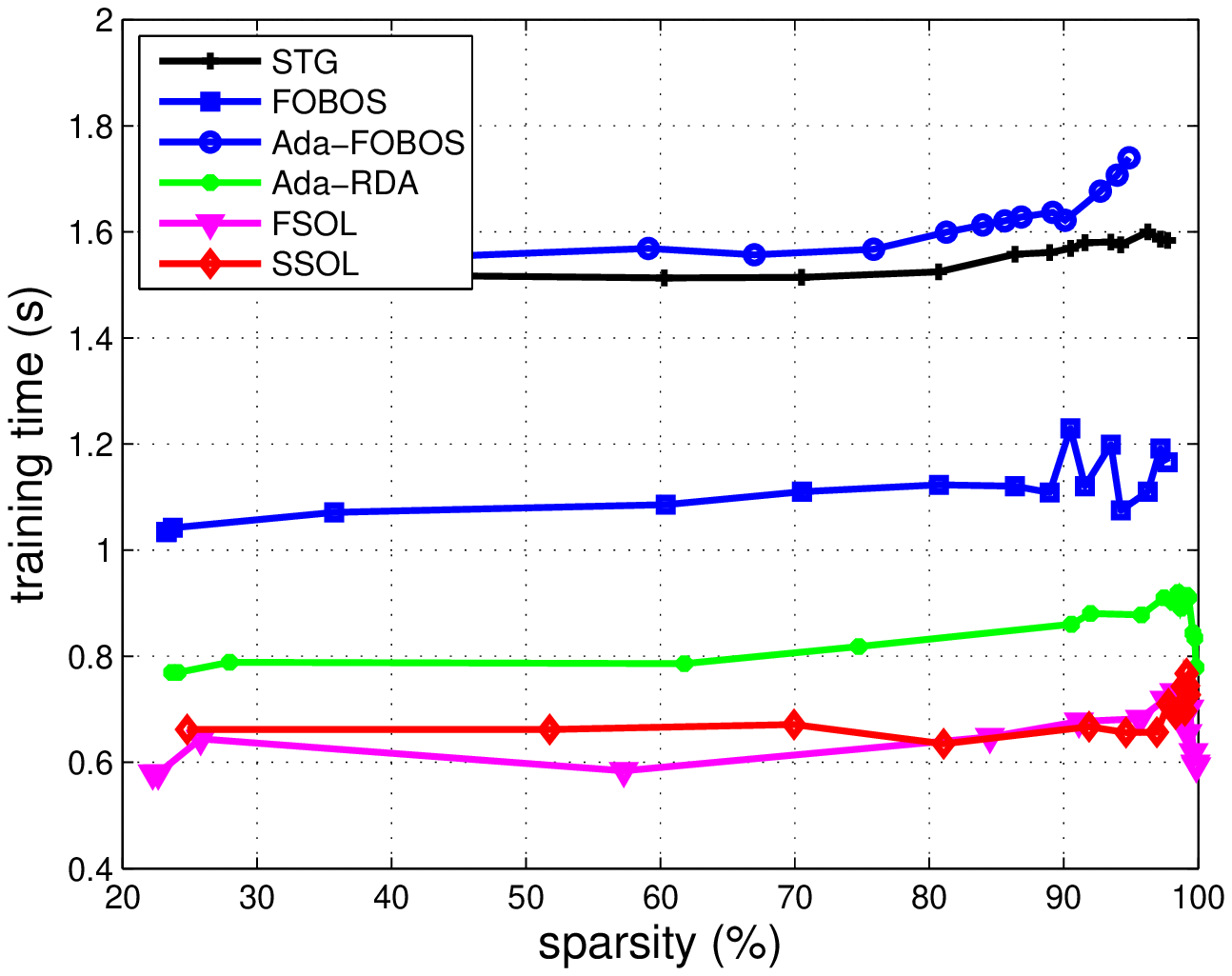} \\
\makeMyboxB{(a) NEWS}
\makeMyboxB{(b) RCV1} \\
\includeMyGraphicB{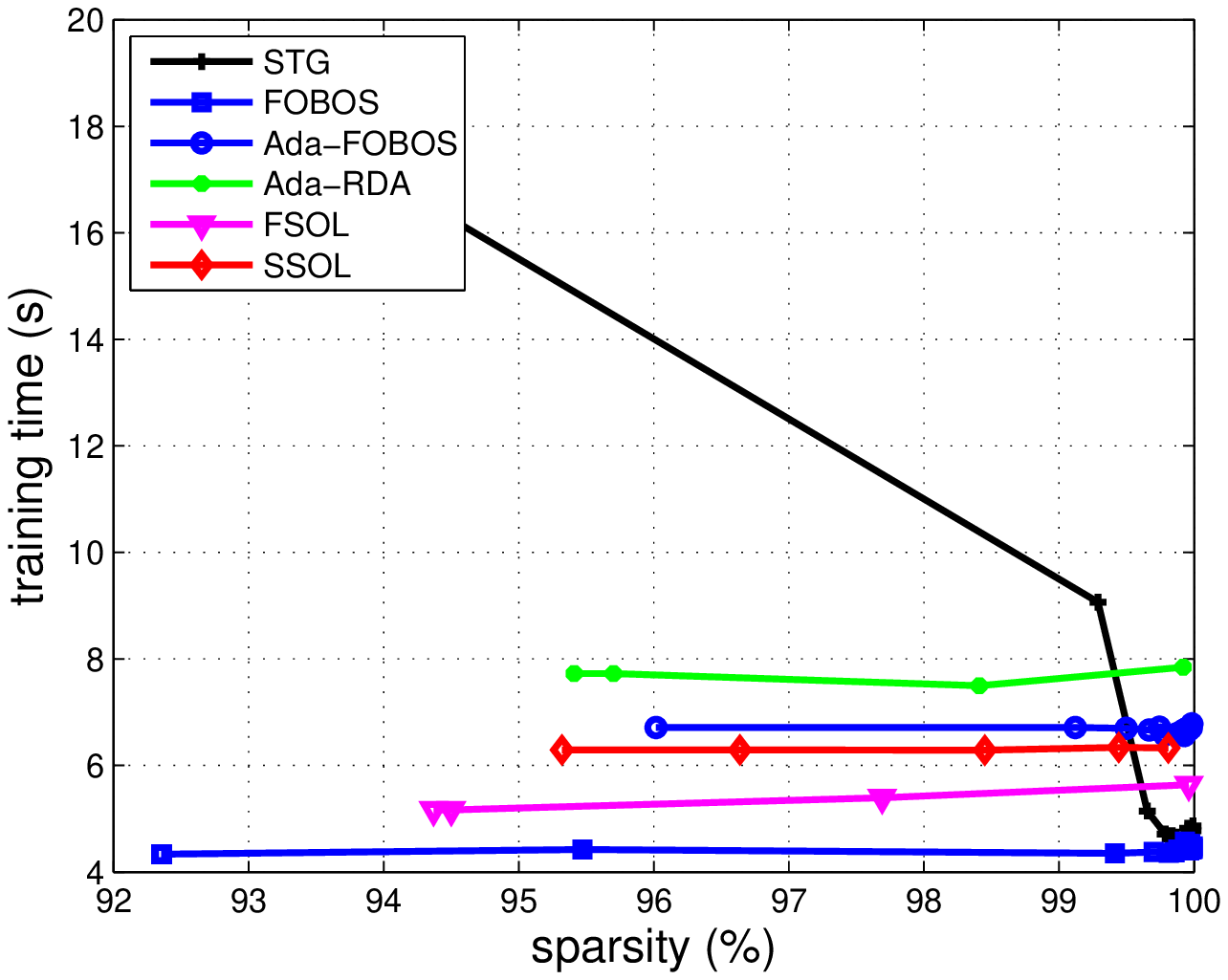}
\includeMyGraphicB{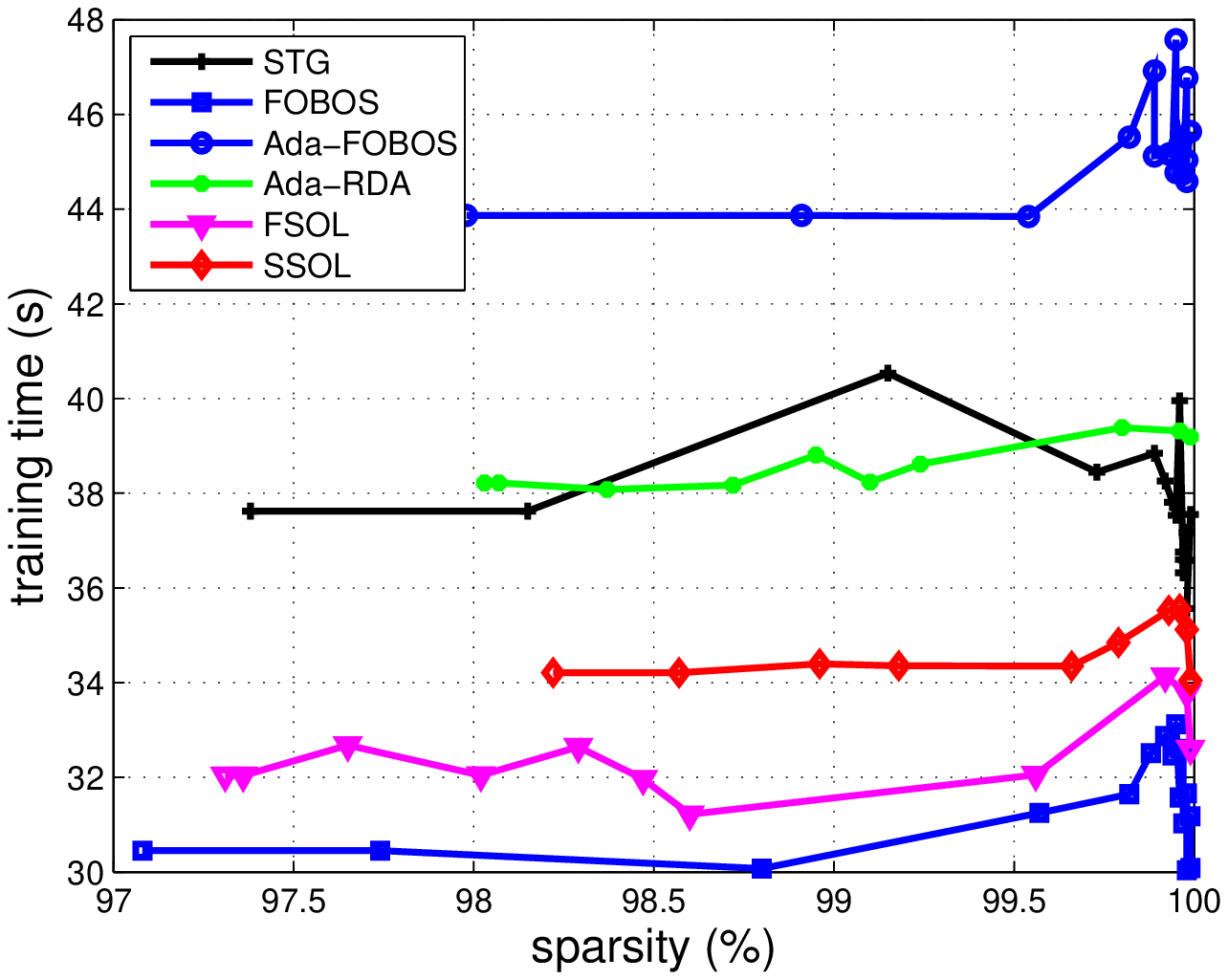} \\
\makeMyboxB{(c) URL}
\makeMyboxB{(d) WEBSPAM} \\
\caption{Time cost on four large-scale datasets: NEWS, RCV1, URL, and WEBSPAM}\label{fig:running_time}
\end{figure*}

\subsection{Applications on Online Anomaly Detection}
Our last two experiments are to explore the proposed sparse online classification technique with application to an online anomaly detection task, i.e., malicious URL detection and web spam detection, where the class distribution is imbalanced in real-world scenarios.

\subsubsection{Malicious URL Detection}\label{sec:url}
In this experiment, we evaluate the cost-sensitive based online learning algorithms for malicious URL detection task with the benchmark dataset that can be downloaded from~\footnote{\url{http://sysnet.ucsd.edu/projects/url/}}. The original URL data set is created in purpose to make it somehow class-balanced, and it has already been used in some previous studies.

In this experiment, we create a subset (denoted as ``ULR2") by sampling from the original data set to make it close to a more realistic distribution scenario where the number of normal URLs is significantly larger than the number of malicious URLs. Following the experiment setting in~\cite{Zhao:2013:COA:2487575.2487647}, we choose $10,000$ positive (malicious) instances and $990,000$ negative (normal) instance. Hence, the ratio $T_+ \setminus T_- = 1 \setminus 99$. For test dataset, we collect $100,000$ samples from the original test set with the same ratio. More details of the unbalanced URL dataset are shown in Table~\ref{tab:datasets}.
\begin{figure}[h]
\centering
\includegraphics[width=3.2in]{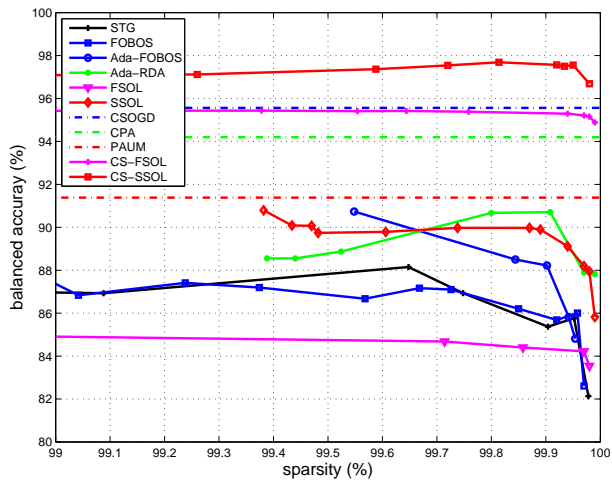}
\caption{Balanced accuracy of different algorithms for malicious URL detection.}\label{fig:result_url}
\end{figure}

We compare the proposed CS-FSOL and CS-SSOL with three other cost-sensitive algorithms (CS-OGD, CPA, and PAUM), as shown in Table~\ref{tab:algs}. In addition, we compare all the cost-insensitive based algorithms to evaluate the classification accuracy without adopting the cost-sensitive lost function. The experiment results are shown in Figure~\ref{fig:result_url}, where CS-OGD, CPA, and PAUM are non-sparse online learning algorithms and thus are invariant to the sparsity.

Several observations can be drawn from the results. First of all, all the cost-sensitive algorithms perform consistently better than their cost-insensitive versions. This indicates that the proposed cost-sensitive algorithm with cost-sensitive loss functions is able to effectively resolve the class-imbalance problem. Second, among all the cost-insensitive algorithms, the second order online learning algorithms are generally better than the first order algorithms. Third, among all the compared algorithms, the proposed CS-SSOL algorithm achieves the best performance, which again validates the efficacy of the proposed technique for real-world data stream classification applications.

\subsubsection{Web Spam Detection}
\begin{figure}[h]
\centering
\includegraphics[width=3.2in]{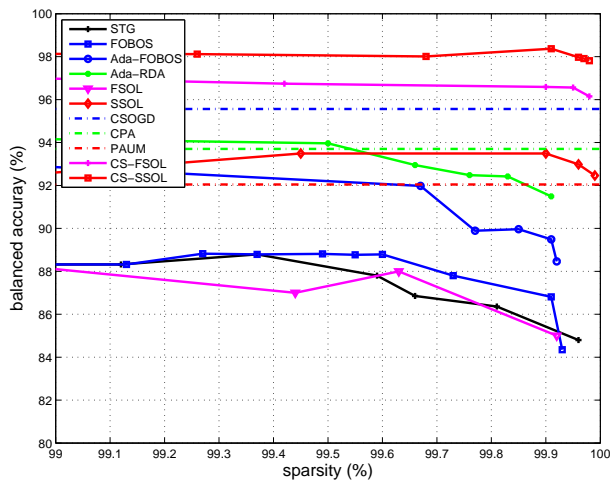}
\caption{Balanced accuracy of different algorithms for web spam detection.}\label{fig:result_web}
\end{figure}

In this experiment, we evaluate the proposed cost-sensitive based online learning algorithms for web spam detection task. We constructed an unbalanced subset of the original web spam dataset used in Section~\ref{sec:large_dataset}.
In particular, for the train dataset, we randomly choose $1,000$ positive instances and $99,000$ negative instances. Hence, the ratio $T_+ \setminus T_-$ of the training set is $1 \setminus 99$. For test dataset, we collect $10,000$ samples from the original test set with the same positive-negative ratio.

We denote the imbalance web spam dataset as ``WEBSPAM2". More details of the unbalanced web spam dataset are shown in Table~\ref{tab:datasets}. As we can see, the feature dimension of WEBSPAM2 dataset ($16,071,971$) is much higher than the one of URL2 ($3,231,961$), and feature representations of WEBSPAM2 dataset are extremely sparse (96.19\% versus 44.96\%). Hence, the anormaly detection task on WEBSPAM2 dataset is very challenge with high-dimensional sparse features and unbalanced data distributions. The experiment settings in this section are the same with Section~\ref{sec:url}, where all cost-sensitive and cost-insensitive algorithms are compared. The experiment results are shown in Figure~\ref{fig:result_web}.

Several observations can be drawn from the results. First of all, for this sparse classification problem, the performances of non-sparse cost-sensitive algorithms decrease significantly. In particular, the cost-insensitive algorithms SSOL and Ada-RDA outperform the cost-sensitive algorithm CAP and PAUM. Second, similar to the previous experiment, the second order online learning algorithms are generally better than the first order algorithms among all the cost-insensitive / cost-sensitive algorithms. Third, the proposed CS-SSOL algorithm consistently achieves the best performance, which again validates the efficacy of the proposed technique for real-world data stream classification applications.

\section{Conclusions and Future work}
In this paper we introduced a framework of sparse online classification (SOC) for large-scale high-dimensional data stream classification tasks. We first showed that the framework essentially includes an existing first-order sparse online classification algorithm as a special case, and can be further extended to derive new sparse online classification algorithms by exploiting second-order information. We also extend the proposed technique to solve cost-sensitive data stream classification problems and explore its applications to online anomaly detection tasks: \emph{malicious URL detection} and \emph{web spam detection}. We analyzed the performance of the proposed algorithms with both theoretical analysis and empirical studies, in which our encouraging experimental results showed that
the proposed algorithms are able to achieve the state-of-the-art performance in comparison to a large family of diverse online learning algorithms.


\end{document}